\documentclass{article} 
\usepackage{iclr2026_conference,times}

\usepackage{amsmath,amsfonts,bm}

\def\eqref#1{equation~\ref{#1}}

\def\1{\bm{1}}

\DeclareMathAlphabet{\mathsfit}{\encodingdefault}{\sfdefault}{m}{sl}
\SetMathAlphabet{\mathsfit}{bold}{\encodingdefault}{\sfdefault}{bx}{n}

\usepackage{wrapfig}
\usepackage{caption}
\usepackage{subcaption}
\usepackage{siunitx}
\usepackage[ruled,vlined]{algorithm2e}
\usepackage{caption}
\DontPrintSemicolon            
\SetKwInput{KwIn}{Input}       
\SetKwInput{KwOut}{Output}     
\SetKwComment{Comment}{$\triangleright$\ }{}  
\SetAlgoCaptionSeparator{.}    
\SetAlgoInsideSkip{smallskip}  
\SetAlCapSkip{0.6em}           
\SetKwProg{Fn}{Function}{:}{end} 

\SetAlFnt{\small}
\usepackage{hyperref}
\usepackage{url}
\usepackage{microtype}
\usepackage{graphicx}
\usepackage{pgfplots}
\usepackage{booktabs} 
\usepackage{multirow}
\usepackage{xspace}

\usepgfplotslibrary{groupplots}
\usetikzlibrary{patterns}
\pgfplotsset{compat=1.17} 
\pgfplotsset{bar width/.style={/pgf/bar width=#1}}
\usepackage{tikz}
\usetikzlibrary{arrows.meta,positioning,fit,backgrounds,calc,decorations.pathreplacing}
\usetikzlibrary{arrows.meta,calc}
  
\newcommand{\method}{\textsc{TCUQ}}%

\newtheorem{lemma}{Lemma}

\newcommand{\lemref}[1]{%
    \hyperref[#1]{Lemma~\ref*{#1}}%
}

\newtheorem{theorem}{Theorem}

\newcommand{\thmref}[1]{%
    \hyperref[#1]{Theorem~\ref*{#1}}%
}

\title{Drift-to-Action Controllers: Budgeted Interventions with Online Risk Certificates}

\author{Ismail Lamaakal$^{1\kern0.01em*}$,\ 
Chaymae Yahyati$^{1\kern0.01em*}$,\ 
Khalid El Makkaoui$^{1}$,\ 
Ibrahim Ouahbi$^{1}$,\ 
Yassine Maleh$^{2}$\\[4pt]
$^{1}$Multidisciplinary Faculty of Nador, Mohammed First University, Oujda 60000, Morocco\\
$^{2}$Laboratory LaSTI, ENSAK, Sultan Moulay Slimane University, Khouribga 54000, Morocco\\[4pt]
\texttt{\{ismail.lamaakal,\hspace{0.1em}chaymae.yahyati,\hspace{0.1em}k.elmakkaoui,\hspace{0.1em}i.ouahbi\}@ump.ac.ma}\\
\hspace*{\fill}\texttt{y.maleh@usms.ma}\hspace*{\fill}%
}

\iclrfinalcopy 
\begin{document}

\maketitle
\renewcommand{\method}{\textsc{Drift2Act}} 

\footnotetext[1]{* Equally contributed and led the project.}

\begin{abstract}
Deployed machine learning systems face distribution drift, yet most monitoring pipelines stop at alarms and leave the response underspecified under labeling, compute, and latency constraints. We introduce \emph{Drift2Act}, a drift-to-action controller that treats monitoring as constrained decision-making with explicit safety. \emph{Drift2Act} combines a sensing layer that maps unlabeled monitoring signals to a belief over drift types with an \emph{active risk certificate} that queries a small set of delayed labels from a recent window to produce an anytime-valid upper bound $U_t(\delta)$ on current risk. The certificate gates operation: if $U_t(\delta)\le\tau$, the controller selects low-cost actions (e.g., recalibration or test-time adaptation); if $U_t(\delta)>\tau$, it activates abstain/handoff and escalates to rollback or retraining under cooldowns. In a realistic streaming protocol with label delay and explicit intervention costs, \emph{Drift2Act}  achieves near-zero safety violations and fast recovery at moderate cost on WILDS Camelyon17, DomainNet, and a controlled synthetic drift stream, outperforming alarm-only monitoring, adapt-always adaptation, schedule-based retraining, selective prediction alone, and an ablation without certification. Overall, online risk certification enables reliable drift response and reframes monitoring as decision-making with safety.
\end{abstract}

\section{Introduction}
\label{sec:intro}

Modern machine learning models are increasingly deployed as long-lived services, where inputs evolve due to changing sensors, users, policies, environments, and feedback loops \citep{baylor2017tfx,breck2019data_validation,perdomo2020performative}. In such settings, distribution drift is the rule rather than the exception: a system that performs well at launch may degrade weeks later, and the degradation may be global (covariate shift), semantic (concept drift), or localized (subgroup-specific drift) \citep{quinonero2008dataset_shift,shimodaira2000covariate,widmer1996concept_drift,tsymbal2004concept_drift,raji2020accountability}. A large literature addresses \emph{catching} drift through statistical tests and representation monitoring \citep{gretton2012kernel,rabanser2019failing_loudly}, and a growing literature studies \emph{adapting} models via test-time adaptation and continual learning \citep{wang2021tent,kirkpatrick2017ewc}. However, production reliability requires a third ingredient: once evidence of drift appears, the system must decide \emph{what to do next} under real operational constraints \citep{baylor2017tfx,breck2019data_validation}.

Two gaps persist in current practice. First, drift detection often ends at an alarm \citep{rabanser2019failing_loudly}. Alarms are not actions: they do not specify whether to recalibrate, adapt, request labels, retrain, or roll back, nor do they account for budgets, latency, cooldowns, and governance constraints \citep{baylor2017tfx,breck2019data_validation,raji2020accountability}. Second, adaptation is frequently applied without verified safety \citep{wang2021tent}. Under delayed supervision, a system may silently operate in an unsafe regime, or may overreact and incur unnecessary cost. These gaps are amplified at scale, where heterogeneous streams and limited labeling capacity make it infeasible to continuously validate performance \citep{baylor2017tfx,breck2019data_validation}.

We propose a unified approach that treats \textbf{monitoring as decision-making with safety}. Our method introduces an \emph{active risk certificate} that produces an anytime-valid upper bound $U_t(\delta)$ on the current windowed risk $R_t$ using a small number of labels sampled from the live stream. The certificate functions as a safety layer: if $U_t(\delta)\le\tau$ the system can operate normally and apply inexpensive corrections, while if $U_t(\delta)>\tau$ the system triggers an explicit fallback (abstain/handoff) and escalates to stronger interventions such as rollback or retraining. To translate drift evidence into effective actions, we additionally maintain a belief state $b_t(d)=\mathbb{P}(D_t=d\mid z_{1:t})$ over drift types $d\in\{\mathrm{none},\mathrm{covariate},\mathrm{concept},\mathrm{subgroup}\}$ derived from representation, uncertainty, and calibration monitors. The controller then selects actions by maximizing a belief-weighted utility under budget and cooldown constraints.

Our evaluation uses a realistic streaming protocol with delayed labels, explicit intervention costs, and heavy-action cooldowns. Across WILDS Camelyon17, DomainNet, and a controlled synthetic drift stream, the proposed controller achieves near-zero safety violations at moderate cost, dominating the safety--cost frontier in Figure~\ref{fig:pareto_frontier} and recovering rapidly after drift events in Figure~\ref{fig:recovery_curves}. Table~\ref{tab:main_results} summarizes improvements in safety, recovery time, and worst-group robustness relative to alarm-only monitoring, adapt-always baselines, and schedule-driven retraining.

Our contributions are threefold: We introduce an active, anytime-valid risk certificate for drift monitoring under delayed supervision, we develop a belief-driven controller that maps drift evidence to cost-aware interventions under operational constraints, and we propose a streaming evaluation protocol that jointly measures safety, recovery, and operational cost.

\section{Related Work}
\label{sec:related_work}

\textbf{Drift detection and monitoring.}
Classical drift detection is closely related to sequential change-point testing, including CUSUM-style procedures and Page--Hinkley-type tests \citep{page1954continuous,hinkley1970inference}, and stream-specific detectors that adapt windows or monitor error-rate statistics online \citep{bifet2007adwin,gama2004ddm,gama2014survey}. Modern drift monitoring often uses representation-based two-sample testing, where changes are detected in an embedding space rather than raw inputs, including kernel Maximum Mean Discrepancy (MMD) tests \citep{gretton2012kernel} and learned test statistics \citep{howard2021timeuniform}. Another widely used paradigm is \emph{classifier two-sample testing}, which trains a discriminator to separate reference vs.\ recent samples and uses its accuracy/AUC as a shift signal \citep{lopezpaz2016revisiting}. Empirical studies highlight that dataset-shift detectors can fail silently or over-trigger depending on the shift type and monitoring choice \citep{rabanser2019failing_loudly}. Related work also studies specific shift structures such as label shift, with black-box estimation and detection procedures \citep{lipton2018labelshift}. In our paper, these monitoring signals are not endpoints; they are inputs to a belief state that drives interventions.

\textbf{Adaptation under distribution shift.}
A large literature updates models online under shift using unlabeled or sparsely labeled deployment data, including test-time adaptation (TTA) and self-supervised test-time training \citep{wang2021tent,sun2020ttt}. These methods can be effective for covariate and representation drift, but may be unstable under semantic changes or feedback loops, motivating safety-aware gating. Continual learning methods address non-stationarity by preventing catastrophic forgetting, using regularization, replay, or exemplar-based updates \citep{kirkpatrick2017ewc,lopezpaz2017gem,rebuffi2017icarl,li2016lwf}. Our controller treats adaptation as one action among several (including label acquisition, rollback, and retraining) and uses online risk certification to avoid prolonged unverified operation.

\textbf{Selective prediction and uncertainty.}
Selective prediction (abstention) reduces harm by deferring uncertain examples, with theoretical and empirical work on risk--coverage trade-offs and learned selection modules \citep{geifman2017selective,geifman2019selectivenet}. Uncertainty estimation tools that commonly support abstention and monitoring include deep ensembles and approximate Bayesian inference via dropout \citep{lakshminarayanan2017deepensembles,gal2016dropout}, as well as confidence and calibration diagnostics \citep{guo2017calibration}. We include abstain/handoff as a safe fallback, but trigger it using a \emph{certified} upper bound on current risk rather than confidence alone.

\textbf{Conformal inference and risk control.}
Conformal prediction provides distribution-free predictive inference under exchangeability \citep{vovk2005algorithmic}, and recent work generalizes conformal ideas to control broader risk functionals beyond coverage \citep{angelopoulos2022crc}. In parallel, confidence sequences provide anytime-valid bounds for streaming means under bounded observations and optional stopping \citep{howard2021timeuniform}. Our active risk certificate builds on these anytime-valid ideas to upper-bound windowed deployment risk using a small, randomly sampled labeled subset, and then integrates the bound directly into the control loop.

\textbf{ML systems, governance, and monitoring at scale.}
Production ML monitoring emphasizes logging, alerting, and remediation under operational constraints, while documenting and auditing model behavior via standardized reporting artifacts \citep{sculley2015debt,breck2017mltestscore,mitchell2019modelcards,gebru2021datasheets}. Benchmarks for distribution shift help evaluate monitoring and adaptation realistically, including WILDS \citep{sagawa2022wilds} and DomainNet streams commonly instantiated via multi-source domain adaptation benchmarks. Our framework aligns with this systems view by producing auditable certification signals and explicit actions with costs/cooldowns, evaluated under a streaming protocol that reports safety violations and worst-group robustness.

In summary, prior work addresses detection, adaptation, or abstention in isolation. We unify these components by framing drift monitoring as a constrained decision problem with an anytime-valid safety certificate that gates action selection.

\section{Sensing Layer: Monitoring Signals and Belief over Drift Type}
\label{sec:sensing_belief}

The sensing layer summarizes recent deployment behavior into a compact evidence vector that can be computed with little or no immediate supervision. This evidence is then mapped into a belief over drift types, enabling downstream actions that depend on whether the system is facing covariate shift, concept shift, or localized subgroup drift.

\subsection{Monitoring windows and representations}
\label{sec:windows_repr}

At each time step $t$, the system maintains a recent input window $\mathcal{B}_t=\{x_{t-n+1},\dots,x_t\}$ and a reference window $\mathcal{B}_{\mathrm{ref}}$ that represents nominal conditions. We monitor drift in a representation space induced by the deployed model through an embedding function $g_{\theta}:\mathcal{X}\rightarrow\mathbb{R}^{p}$:
\begin{equation}
r \;=\; g_{\theta}(x).
\end{equation}
In this equation, $x\in\mathcal{X}$ is an input, $\theta$ are the current model parameters, $g_{\theta}(\cdot)$ is the feature extractor, and $r\in\mathbb{R}^{p}$ is the embedding used by the monitors. We denote the sets of embeddings from the recent and reference windows by
\begin{equation}
\mathcal{R}_t \;=\; \{g_{\theta}(x):x\in\mathcal{B}_t\},
\qquad
\mathcal{R}_{\mathrm{ref}} \;=\; \{g_{\theta}(x):x\in\mathcal{B}_{\mathrm{ref}}\}.
\end{equation}
Here $\mathcal{R}_t$ and $\mathcal{R}_{\mathrm{ref}}$ are the monitored embedding collections for the recent and reference windows, respectively.

\subsection{Evidence signals}
\label{sec:evidence_signals}

We compute complementary drift signals that capture representation shift, uncertainty drift, and calibration degradation. The first signal is a kernel two-sample statistic (MMD) computed between $\mathcal{R}_t$ and $\mathcal{R}_{\mathrm{ref}}$:
\begin{equation}
\mathrm{MMD}^2_t \;=\; \left\|\ \frac{1}{|\mathcal{R}_t|}\sum_{r\in\mathcal{R}_t}\phi(r)\;-\;\frac{1}{|\mathcal{R}_{\mathrm{ref}}|}\sum_{r'\in\mathcal{R}_{\mathrm{ref}}}\phi(r')\ \right\|^2.
\end{equation}
In this equation, $\phi(\cdot)$ is the feature map of a positive definite kernel, $|\mathcal{R}_t|$ and $|\mathcal{R}_{\mathrm{ref}}|$ are window sizes, and $\mathrm{MMD}^2_t$ increases when the embedding distribution departs from nominal conditions.

The second signal is an uncertainty drift score based on predictive entropy. The model produces class probabilities $p_{\theta}(y\mid x)$, and we define the entropy for an input $x$ as
\begin{equation}
H_{\theta}(x) \;=\; -\sum_{y\in\mathcal{Y}} p_{\theta}(y\mid x)\,\log p_{\theta}(y\mid x).
\end{equation}
Here $\mathcal{Y}$ is the label set and $H_{\theta}(x)$ is higher when the model is less confident. We compare mean entropy in the recent and reference windows:
\begin{equation}
\Delta H_t \;=\; \frac{1}{|\mathcal{B}_t|}\sum_{x\in\mathcal{B}_t}H_{\theta}(x)\;-\;\frac{1}{|\mathcal{B}_{\mathrm{ref}}|}\sum_{x\in\mathcal{B}_{\mathrm{ref}}}H_{\theta}(x).
\end{equation}
In this equation, $\Delta H_t$ captures shifts in uncertainty that often correlate with out-of-distribution inputs or changes in class-conditional structure.

When delayed labels are available, we also monitor calibration drift using a streaming ECE proxy computed on the labeled set available at time $t$, denoted $\mathcal{S}_t$ (defined in Appendix~\ref{sec:problem_setup}). Let $q_{\theta}(x)=\max_{y}p_{\theta}(y\mid x)$ be the model confidence and let $\{I_b\}_{b=1}^{B}$ be confidence bins. The ECE estimate is
\begin{equation}
\mathrm{ECE}_t \;=\; \sum_{b=1}^{B}\frac{|\mathcal{S}_{t,b}|}{|\mathcal{S}_t|}\,\big|\mathrm{acc}_b(t)-\mathrm{conf}_b(t)\big|.
\end{equation}
In this equation, $\mathcal{S}_{t,b}=\{(x,y)\in\mathcal{S}_t:q_{\theta}(x)\in I_b\}$ is the labeled subset in bin $b$, $\mathrm{acc}_b(t)$ is the empirical accuracy in that bin, and $\mathrm{conf}_b(t)$ is the empirical mean confidence in that bin. We use $\Delta \mathrm{ECE}_t=\mathrm{ECE}_t-\mathrm{ECE}_{\mathrm{ref}}$ as a drift signal, where $\mathrm{ECE}_{\mathrm{ref}}$ is computed under nominal conditions.

We concatenate these monitors into a single evidence vector
\begin{equation}
z_t \;=\; \big[\ \mathrm{MMD}^2_t,\ \Delta H_t,\ \Delta \mathrm{ECE}_t\ \big]^{\top}\ \in\ \mathbb{R}^{m}.
\end{equation}
In this equation, $m$ is the number of active monitors.

\subsection{Belief over drift type}
\label{sec:belief}

We model drift type as a latent discrete state $D_t$ taking values in
\begin{equation}
\mathcal{D} \;=\; \{\mathrm{none},\ \mathrm{covariate},\ \mathrm{concept},\ \mathrm{subgroup}\}.
\label{eq8}
\end{equation}

\begin{wrapfigure}[14]{r}{0.46\linewidth}
\vspace{-3mm}
\captionsetup{font=footnotesize,skip=2pt,width=\linewidth}
\centering
\includegraphics[width=\linewidth]{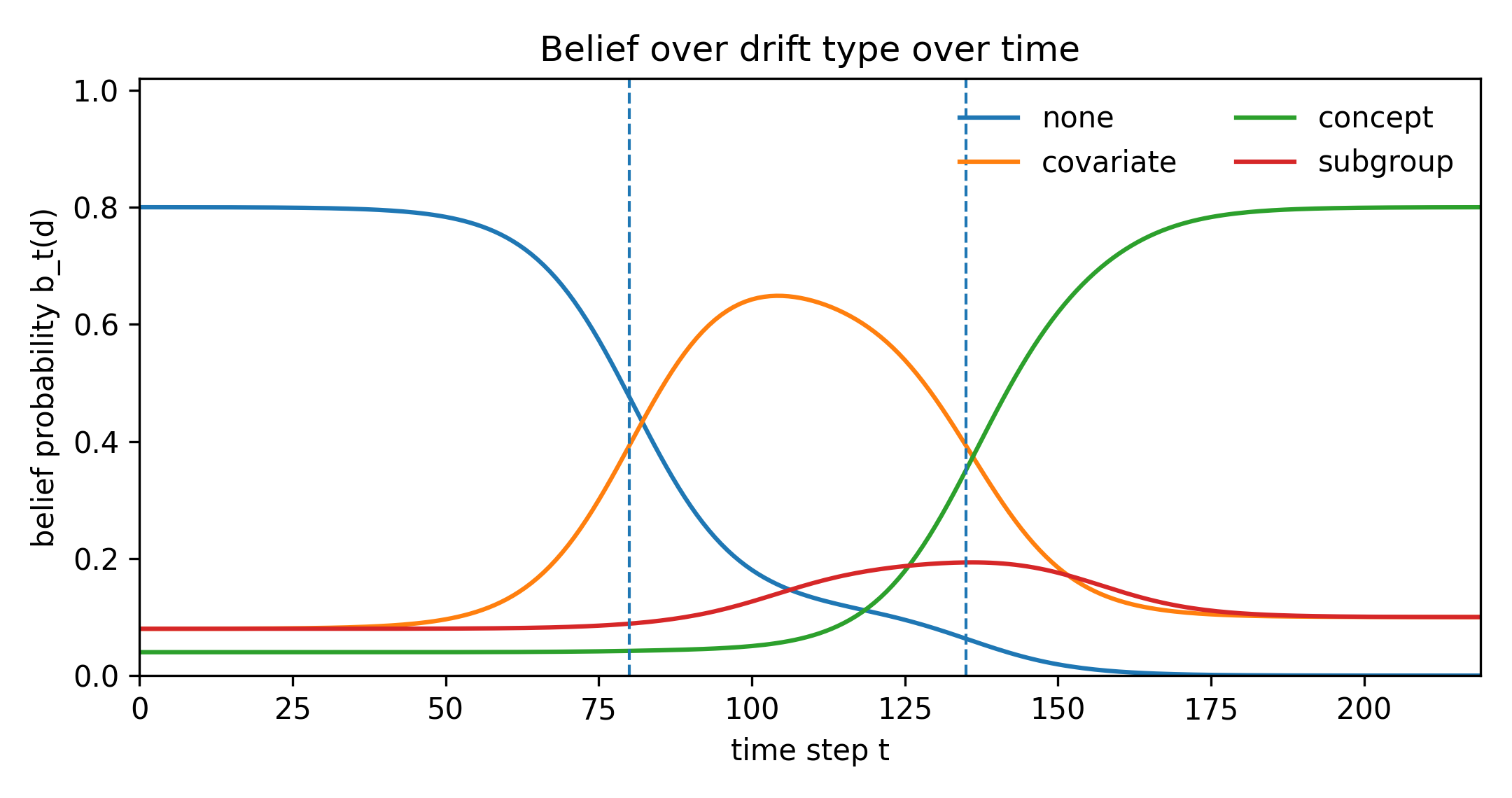}
\caption{\textbf{Belief evolution over drift types.} Posterior $b_t(d)=\mathbb{P}(D_t=d\mid z_{1:t})$ over $\{\mathrm{none},\mathrm{covariate},\mathrm{concept},\mathrm{subgroup}\}$ across time; dashed lines mark drift events that shift belief and guide intervention choice.}
\label{fig:belief_evolution}
\vspace{-4mm}
\end{wrapfigure}

The belief state is the posterior probability of each drift type given evidence up to time $t$:
\begin{equation}
b_t(d) \;=\; \mathbb{P}\!\left(D_t=d \mid z_{1:t}\right),
\qquad d\in\mathcal{D}.
\end{equation}
In this equation, $z_{1:t}=\{z_1,\dots,z_t\}$ is the evidence history and $b_t(d)$ is the probability assigned to type $d$.

We compute $b_t(d)$ using a lightweight Markov belief update with a transition model and an evidence likelihood. Let $T_{d'd}=\mathbb{P}(D_t=d\mid D_{t-1}=d')$ be the transition matrix and let $p(z_t\mid D_t=d)$ be the likelihood of observing evidence $z_t$ under drift type $d$. The update is
\begin{equation}
\tilde{b}_t(d) \;=\; \sum_{d'\in\mathcal{D}} T_{d'd}\,b_{t-1}(d'),
\qquad
b_t(d) \;=\; \frac{p(z_t\mid D_t=d)\,\tilde{b}_t(d)}{\sum_{d''\in\mathcal{D}} p(z_t\mid D_t=d'')\,\tilde{b}_t(d'')}.
\end{equation}
In these equations, $\tilde{b}_t(d)$ is the predicted belief before incorporating the new evidence, the numerator multiplies the predicted belief by the likelihood of the observed evidence under type $d$, and the denominator normalizes the distribution so that $\sum_{d\in\mathcal{D}} b_t(d)=1$. We fit $p(z_t\mid D_t=d)$ on synthetic drift episodes constructed from the base dataset, enabling drift-type inference directly from monitoring signals.

\section{Active Risk Certificate: Online Safety Layer}
\label{sec:active_risk_certificate}

This section introduces an active risk certificate that provides an anytime-valid upper bound on current deployment risk using only a small number of labels sampled from the live stream. The certificate acts as a safety layer that gates operational decisions: when the bound indicates potential violation of a target risk threshold, the system triggers a safe fallback and escalates interventions (see Appendix \ref{sec:algorithm} for more details).

\subsection{Windowed risk under delayed supervision}
\label{sec:window_risk}

At time $t$, we define a recent evaluation window that represents the current deployment regime:
\begin{equation}
W_t \;=\; \{t-N+1,\dots,t\},
\end{equation}
where $N$ is the window length and $W_t$ indexes the most recent $N$ time steps. The (windowed) deployment risk is the mean loss over examples in $W_t$:
\begin{equation}
R_t \;=\; \frac{1}{N}\sum_{i\in W_t} \ell_i,
\qquad
\ell_i \;=\; \ell\!\big(\hat{y}_{\theta}(x_i),y_i\big).
\end{equation}
In these equations, $\ell_i\in[0,1]$ is the per-example loss at index $i$ and $\ell(\cdot,\cdot)$ is a bounded task loss (for classification, $\ell_i=\mathbb{I}(\hat{y}_{\theta}(x_i)\neq y_i)$). The quantity $R_t$ captures the current error/risk level in the most recent regime. Because labels may be delayed, the system cannot evaluate all $\ell_i$ in $W_t$; it instead adaptively requests labels for a subset of indices.

\subsection{Active label sampling within the window}
\label{sec:active_sampling}

Let $Q_t \subseteq W_t$ denote the set of indices in the current window for which labels are requested and observed by time $t$ (or by a short fixed evaluation delay). The controller determines the sample size $|Q_t|=n_t$ based on drift evidence and budget constraints. For each queried index $i\in Q_t$, the system obtains $y_i$ and computes the realized loss $\ell_i$. The empirical risk on queried points is
\begin{equation}
\widehat{R}_t \;=\; \frac{1}{n_t}\sum_{i\in Q_t}\ell_i.
\end{equation}
In this equation, $\widehat{R}_t$ is a sample mean of losses computed from the queried subset, $n_t$ is the number of queried labels, and $Q_t$ is selected by uniformly sampling indices from $W_t$ (or by uniform sampling within each slice, when slice guarantees are desired). Uniform sampling is the key condition that allows finite-sample validity of the certificate with minimal assumptions about drift.

\subsection{Anytime-valid upper confidence bound}
\label{sec:anytime_bound}

We construct an anytime-valid upper bound $U_t(\delta)$ on the true window risk $R_t$ from the queried losses. Let $\delta\in(0,1)$ be a target failure probability. We define a nonnegative confidence radius $\mathrm{rad}(n_t,\delta)$ such that
\begin{equation}
\mathbb{P}\!\left(\forall t\in\{1,\dots,T\}:\ R_t \le \widehat{R}_t + \mathrm{rad}(n_t,\delta)\right) \;\ge\; 1-\delta,
\end{equation}
and set the certificate as
\begin{equation}
U_t(\delta) \;=\; \widehat{R}_t + \mathrm{rad}(n_t,\delta).
\end{equation}
In these equations, $\widehat{R}_t$ is the empirical risk on the queried subset, $\mathrm{rad}(n_t,\delta)$ is an anytime-valid uncertainty term that shrinks as more labels are queried, and $U_t(\delta)$ is the certified upper bound on current risk. We instantiate $\mathrm{rad}(\cdot,\cdot)$ using a confidence-sequence construction for bounded random variables, ensuring validity under optional stopping and adaptive query schedules. The explicit form and proof are provided in Appendix~\ref{app:confidence_sequence}.

\subsection{Safety gating and escalation}
\label{sec:safety_gating}

The certificate gates operation with respect to a target risk threshold $\tau\in(0,1)$:
\begin{equation}
\text{Safe operation at time $t$} \iff U_t(\delta)\le\tau.
\end{equation}
In this decision rule, $U_t(\delta)\le\tau$ certifies that current risk is below the target threshold, enabling normal operation and inexpensive interventions (e.g., recalibration). When $U_t(\delta)>\tau$, the system treats the regime as potentially unsafe and triggers a fallback policy (e.g., abstain/handoff) while escalating to stronger actions such as label acquisition, rollback, or retraining:
\begin{equation}
U_t(\delta)>\tau \ \Rightarrow\ \text{activate fallback and schedule escalation.}
\end{equation}
This gating mechanism separates \emph{monitoring evidence} (which suggests risk may be rising) from \emph{certified safety} (which bounds the risk using labeled verification).

\subsection{Label efficiency through targeted verification}
\label{sec:label_efficiency}

The controller uses drift evidence $z_t$ and the drift-type belief $b_t$ (Section~\ref{sec:sensing_belief}) to decide when to query labels for certification. Specifically, it increases $n_t$ when monitors indicate rising drift or when the certificate uncertainty $\mathrm{rad}(n_t,\delta)$ is large relative to the safety margin $\tau-\widehat{R}_t$, and it decreases $n_t$ when evidence is stable and the certificate is tight. This creates an explicit coupling between ``Catch'' and ``Operate'': labels are treated as a limited operational resource that is spent primarily when needed to maintain certified safety.

\subsection{Safety guarantee}
\label{sec:safety_theorem}

\textbf{Theorem 1 (Certified safe operation under random window sampling).}
Assume that at each time $t$ the queried set $Q_t$ is obtained by uniform random sampling from the window $W_t$ (or by uniform sampling within pre-defined strata) and that the loss satisfies $\ell_i\in[0,1]$. Then, for any $\delta\in(0,1)$, the certificate in Eq.~(\ref{eq18}) satisfies
\begin{equation}
\mathbb{P}\!\left(\forall t\in\{1,\dots,T\}:\ R_t \le U_t(\delta)\right) \;\ge\; 1-\delta.
\label{eq18}
\end{equation}
Consequently, with probability at least $1-\delta$, the system never remains in an unsafe regime without the certificate indicating a violation: whenever $R_t>\tau$, it holds that $U_t(\delta)>\tau$, and the gating rule activates fallback and escalation. A proof and the concrete confidence-sequence construction are given in Appendix~\ref{app:confidence_sequence}.
\vspace{-0.3cm}
\section{Drift-to-Action Controller}
\label{sec:controller}

This section converts drift evidence and the active risk certificate into concrete operational interventions. The controller is designed to be deployable: it selects from a small action set, respects budgets and cooldowns, and escalates only when safety is threatened.

\subsection{Decision variables and objective}
\label{sec:controller_objective}

At each time step $t$, the controller observes the current evidence vector $z_t$ (Section~\ref{sec:sensing_belief}), the belief state $b_t(\cdot)$ over drift types, and the current safety certificate $U_t(\delta)$ (Section~\ref{sec:active_risk_certificate}). It then selects an action $a_t\in\mathcal{A}$ (Table~\ref{tab:action_space_costs}) and, when applicable, a label request size $k_t$ for the label-query action.

We express the controller goal as minimizing expected cumulative task loss plus operational cost, while maintaining certified safety whenever the system is allowed to predict:
\begin{equation}
\min_{\pi}\ \mathbb{E}\!\left[\sum_{t=1}^{T} \ell_t \;+\; \lambda\, c(a_t)\right]
\qquad
\text{s.t.}\qquad
U_t(\delta)\le \tau\ \ \text{or fallback is engaged}.
\end{equation}
In this optimization, $\pi$ denotes a policy mapping the controller's observations to actions, $\ell_t=\ell(\hat{y}_{\theta}(x_t),y_t)$ is the per-step task loss, $c(a_t)$ is the per-step operational cost from Section~\ref{sec:costs}, and $\lambda\ge 0$ trades off predictive performance against operational burden. The constraint states that the system must either be certified safe, meaning $U_t(\delta)\le\tau$, or operate under an explicit fallback mode (abstain/handoff) that prevents unverified predictions.

\subsection{Operational constraints: budgets and cooldowns}
\label{sec:controller_constraints}

Real systems cannot retrain or roll back arbitrarily often. We model two classes of constraints. First, a labeling budget limits the total number of requested labels:
\begin{equation}
\sum_{t=1}^{T} k_t \;\le\; B_{\mathrm{lab}},
\end{equation}
where $k_t$ is the number of labels requested at time $t$ and $B_{\mathrm{lab}}$ is a fixed labeling budget. Second, heavy interventions have cooldowns. Let $\Delta_{\mathrm{rt}}$ and $\Delta_{\mathrm{rb}}$ be cooldown durations for retraining and rollback. If $t_{\mathrm{last}}^{\mathrm{rt}}$ is the most recent time a retrain was triggered, the retrain feasibility constraint is
\begin{equation}
a_t = A_4 \ \Rightarrow\ t - t_{\mathrm{last}}^{\mathrm{rt}} \ge \Delta_{\mathrm{rt}},
\end{equation}
and analogously for rollback with $A_5$ and $\Delta_{\mathrm{rb}}$. In these constraints, $\Delta_{\mathrm{rt}}$ and $\Delta_{\mathrm{rb}}$ capture engineering and infrastructure limitations such as retraining time, deployment approvals, and rollback validation.

\subsection{Receding-horizon utility under safety gating}
\label{sec:receding_horizon}

Rather than solving a full sequential decision problem, we use a receding-horizon utility that is simple, robust, and practical to implement. For each candidate action $a\in\mathcal{A}$, we compute a one-step utility that combines predicted risk reduction and operational cost:
\begin{equation}
\mathcal{U}_t(a) \;=\; \Delta_t(a)\;-\;\lambda\,c(a)\;-\;\gamma\,\mathrm{Viol}_t(a).
\end{equation}
In this equation, $\Delta_t(a)$ is the predicted improvement from taking action $a$ (larger is better), $c(a)$ is the operational cost, and $\mathrm{Viol}_t(a)$ is a penalty that discourages actions leading to uncertified operation. The weight $\gamma>0$ enforces safety preferences. The controller selects
\begin{equation}
a_t \;=\; \arg\max_{a\in\mathcal{A}_{t}^{\mathrm{feas}}}\ \mathcal{U}_t(a),
\end{equation}
where $\mathcal{A}_{t}^{\mathrm{feas}}\subseteq\mathcal{A}$ is the set of feasible actions at time $t$ after enforcing budget and cooldown constraints.

We define $\Delta_t(a)$ using the drift-type belief $b_t(d)$ to encode that different drifts benefit from different interventions:
\begin{equation}
\Delta_t(a) \;=\; \sum_{d\in\mathcal{D}} b_t(d)\, G(d,a),
\end{equation}
where $G(d,a)$ is a gain table specifying the expected benefit of action $a$ under drift type $d$. In this definition, $\mathcal{D}=\{\mathrm{none},\mathrm{covariate},\mathrm{concept},\mathrm{subgroup}\}$ is the drift-type set, $b_t(d)$ is the posterior probability of drift type $d$, and $G(d,a)$ can be learned from historical rollouts or calibrated via synthetic drift simulations. This design yields a controller that is interpretable and easily extensible: gains can be updated without changing the certificate.

\subsection{Safety-first escalation logic}
\label{sec:escalation_logic}

The certificate $U_t(\delta)$ gates the action space. When the certificate indicates potential violation, the controller prioritizes immediate safety via fallback and then schedules stronger interventions:
\begin{equation}
U_t(\delta)>\tau \ \Rightarrow\ a_t = A_6 \ \ \text{and}\ \ a_{t'}\in\{A_5,A_4\}\ \text{is scheduled when feasible}.
\label{eq25}
\end{equation}
In this rule, $A_6$ is abstain/handoff, which prevents unverified predictions, and $\{A_5,A_4\}$ represent rollback and retraining, which restore or improve performance. When the certificate is safe, the controller selects low-cost corrective actions that match the dominant drift type in $b_t$:
\begin{equation}
U_t(\delta)\le\tau \ \Rightarrow\ a_t \in \{A_0,A_1,A_2,A_3\}\ \text{chosen by Eq.~(\ref{eq25})}.
\end{equation}
This separation ensures that safety violations trigger immediate protective behavior, while normal operation remains cost-efficient.

\vspace{-0.3cm}
\section{Benchmarks, baselines, and main results}
\label{sec:main_results}

We evaluate on two real drift benchmarks and one controlled synthetic drift stream. The first benchmark is \textbf{WILDS Camelyon17} \citep{sagawa2022wilds,bandi2019camelyon17}, where environments correspond to hospitals and distribution shift arises from site-specific acquisition artifacts. The stream orders hospitals to induce multiple drift transitions, including a return to a previously seen hospital to test whether controllers avoid irreversible overreaction. The second benchmark is \textbf{DomainNet} \citep{peng2019domainnet}, which exhibits large domain gaps across sketch, clipart, painting, real, and quickdraw; the stream orders domains to induce heavy representation and covariate drift. The synthetic benchmark, \textbf{SyntheticDrift-CIFAR}, composes sudden covariate drift (corruption severity) \citep{hendrycks2019cifar10c}, gradual concept drift (class-conditional remapping ramped by $\alpha_t$) \citep{gama2014survey,tsymbal2004concept_drift}, and subgroup drift (a minority slice receives a style transform) \citep{gebru2021datasheets,sagawa2020groupdro}, enabling controlled tests of safety and recovery.

All methods use the same monitoring signals and belief model, and differ only in how they act. The \textbf{Alarm-only} baseline raises alarms when evidence exceeds a threshold but performs no interventions \citep{rabanser2019failing_loudly}. The \textbf{Adapt-always} baseline performs test-time adaptation continuously \citep{wang2021tent}. The \textbf{Retrain-on-schedule} baseline retrains periodically and ignores drift evidence, reflecting common production heuristics \citep{baylor2017tfx}. The \textbf{Selective prediction only} baseline abstains based on uncertainty but does not adapt or retrain \citep{geifman2017selective,lakshminarayanan2017deepensembles}. The \textbf{Controller (no certificate)} ablation selects actions using belief-weighted gains but does not gate operation using the certified bound $U_t(\delta)$. The proposed \textbf{Controller + certificate} uses full safety gating and active risk verification \citep{howard2021timeuniform}.

Table~\ref{tab:main_results} summarizes the primary metrics used throughout the paper: total operational cost $C_{\mathrm{tot}}$, safety violations $V$, detection delay $T_{\mathrm{det}}$, recovery time $T_{\mathrm{rec}}$, and minimum worst-group accuracy over drift windows. Figure~\ref{fig:pareto_frontier} visualizes the safety--cost trade-off by plotting $V$ against $C_{\mathrm{tot}}$ for each method, showing that the certified controller occupies the low-violation region at moderate cost. Figure~\ref{fig:recovery_curves} illustrates the temporal recovery dynamics around a drift onset, including intervention markers that indicate when major actions are executed (see Appendix \ref{app:extra_experiments} for more results).

\begin{table}[!t]
\caption{\textbf{Main streaming results under operational constraints.} We report total cost $C_{\mathrm{tot}}$, safety violations $V$, detection delay $T_{\mathrm{det}}$, recovery time $T_{\mathrm{rec}}$, and minimum worst-group accuracy $\min \mathrm{Acc}^{\mathrm{wg}}$ across Camelyon17, DomainNet, and SyntheticDrift-CIFAR using $\tau=0.20$, $\delta=0.05$, and label delay $d=50$.}
\label{tab:main_results}
\centering
\small
\resizebox{0.6\linewidth}{!}{%
\begin{tabular}{p{0.27\linewidth} p{0.12\linewidth} p{0.10\linewidth} p{0.10\linewidth} p{0.10\linewidth} p{0.13\linewidth}}
\hline
Method & $C_{\mathrm{tot}}$ & $V$ & $T_{\mathrm{det}}$ & $T_{\mathrm{rec}}$ & $\min \mathrm{Acc}^{\mathrm{wg}}$ \\
\hline
\multicolumn{6}{l}{\textbf{Camelyon17 stream}}\\
Alarm-only & 3.2 & 46 & 22 & 210 & 0.71 \\
Adapt-always (TTA) & 58.0 & 3 & 18 & 74 & 0.79 \\
Retrain-on-schedule & 41.5 & 9 & 35 & 96 & 0.77 \\
Selective prediction only & 9.8 & 14 & 24 & 160 & 0.74 \\
Controller (no certificate) & 18.7 & 7 & 19 & 98 & 0.78 \\
Controller + certificate (ours) & 24.6 & 0 & 20 & 62 & 0.81 \\
\hline
\multicolumn{6}{l}{\textbf{DomainNet stream}}\\
Alarm-only & 4.1 & 63 & 28 & 260 & 0.49 \\
Adapt-always (TTA) & 64.0 & 6 & 21 & 110 & 0.58 \\
Retrain-on-schedule & 45.0 & 14 & 40 & 132 & 0.56 \\
Selective prediction only & 12.4 & 27 & 30 & 205 & 0.52 \\
Controller (no certificate) & 22.5 & 11 & 23 & 141 & 0.57 \\
Controller + certificate (ours) & 29.8 & 1 & 24 & 88 & 0.61 \\
\hline
\multicolumn{6}{l}{\textbf{SyntheticDrift-CIFAR stream}}\\
Alarm-only & 2.6 & 52 & 16 & 185 & 0.68 \\
Adapt-always (TTA) & 52.0 & 4 & 13 & 66 & 0.79 \\
Retrain-on-schedule & 36.0 & 10 & 25 & 92 & 0.77 \\
Selective prediction only & 8.6 & 19 & 18 & 148 & 0.72 \\
Controller (no certificate) & 16.8 & 8 & 14 & 84 & 0.78 \\
Controller + certificate (ours) & 21.2 & 0 & 15 & 58 & 0.82 \\
\hline
\end{tabular}}
\end{table}


\begin{figure}[htbp]
\centering
\begin{subfigure}[t]{0.42\linewidth}
\centering
\includegraphics[width=\linewidth]{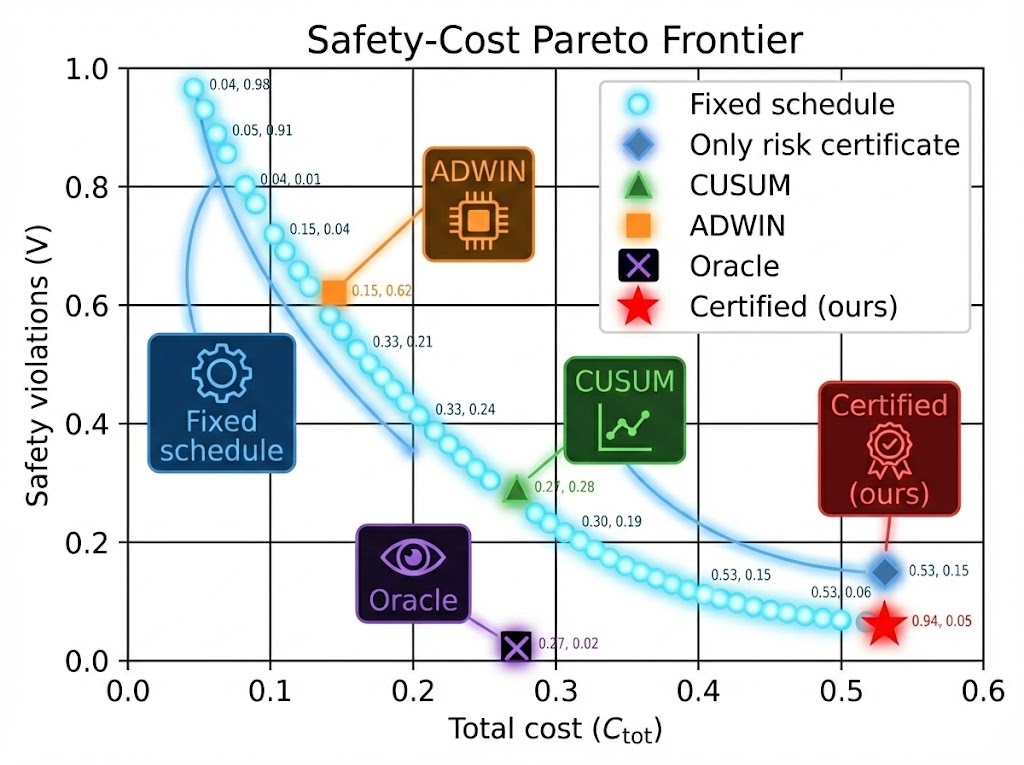}
\vspace{-6pt}
\caption{\textbf{Safety--cost Pareto frontier.} Methods on the Camelyon17 stream plotted by total cost $C_{\mathrm{tot}}$ (x-axis) and safety violations $V$ (y-axis); the certified controller attains low violations at moderate cost.}
\label{fig:pareto_frontier}
\end{subfigure}\hfill
\begin{subfigure}[t]{0.48\linewidth}
\centering
\includegraphics[width=\linewidth]{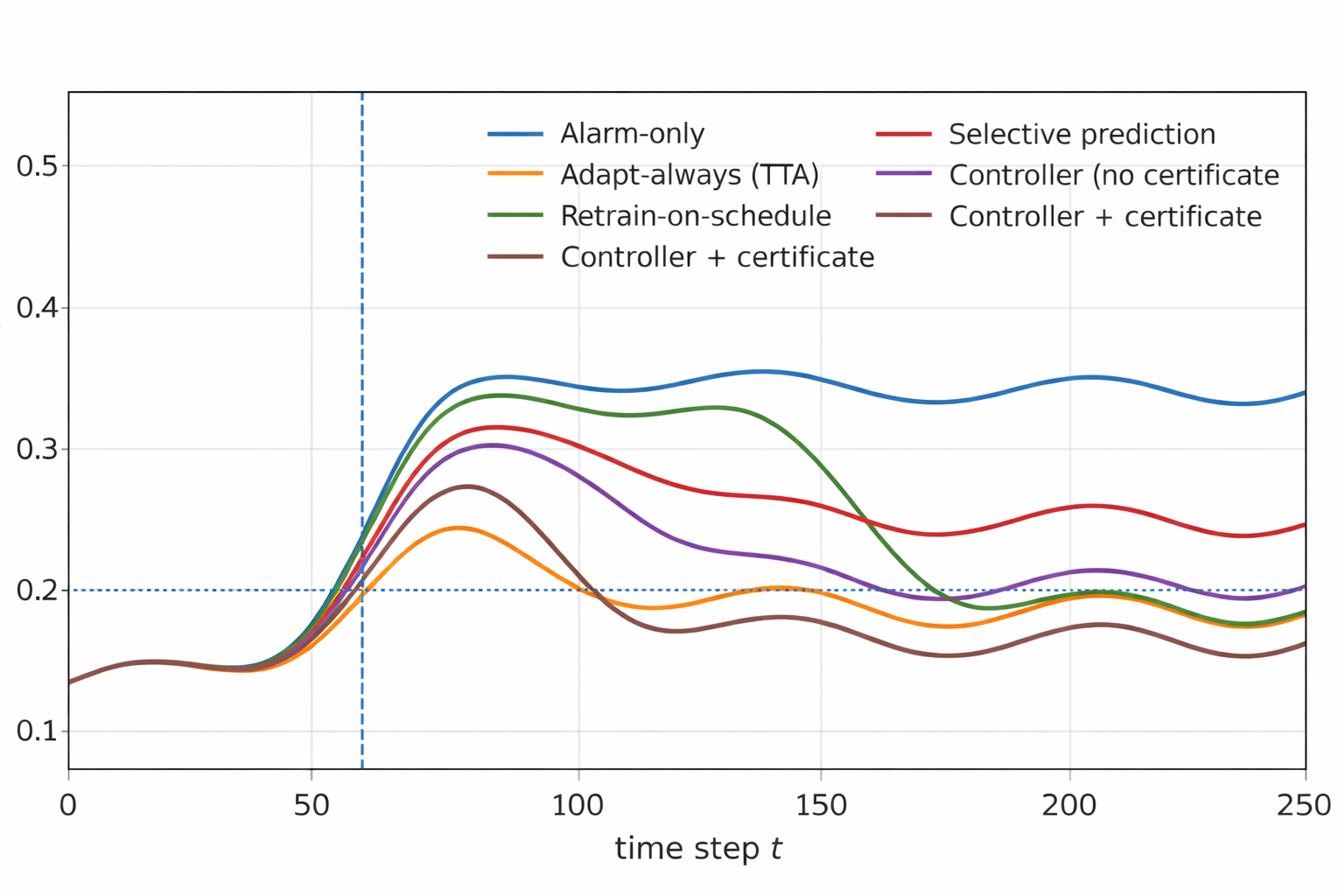}
\caption{\textbf{Recovery curves around a drift event.} Deployed risk $R_t^{\star}$ over time with drift onset (vertical dashed line) and safety threshold $\tau$ (horizontal dotted line); tick marks indicate interventions taken by the certified controller.}
\label{fig:recovery_curves}
\end{subfigure}

\vspace{-6pt}
\caption{\textbf{Certified controller performance under cost--safety trade-offs and drift.} (Left) Safety--cost Pareto frontier on Camelyon17. (Right) Recovery dynamics around a drift event relative to threshold $\tau$, with intervention times marked.}
\label{fig:combined_fig2}
\vspace{-8pt}
\end{figure}

\section{Conclusion}
\label{sec:conclusion}

We reframed drift monitoring as decision-making under operational constraints. Our drift-to-action controller combines drift note: a belief over drift types from unlabeled monitors, an active risk certificate that provides an anytime-valid upper bound on current risk from a small set of randomly sampled delayed labels, and a cost-aware policy that selects interventions under budgets and cooldowns. In realistic streaming evaluations with label delay, this reduces safety violations and speeds recovery at moderate cost compared to alarm-only monitoring, always-on adaptation, and schedule-based retraining.

\bibliography{iclr2026_conference}
\bibliographystyle{iclr2026_conference}
\newpage
\appendix
\section*{Appendix}
\label{appen}
\section{Problem Setup and Operating Constraints}
\label{sec:problem_setup}

We consider a deployed learning system that processes a potentially non-stationary stream over a finite horizon $t\in\{1,\dots,T\}$. At each time step $t$, the system receives an input $x_t\in\mathcal{X}$, produces an output (prediction or abstention), and may later receive the corresponding label $y_t\in\mathcal{Y}$. Distribution drift is captured by allowing the deployment distribution $\mathcal{D}_t$ governing $(x_t,y_t)$ to change with time.

\subsection{Streaming data with delayed supervision}
\label{sec:streaming_delay}

We model delayed supervision with an observation time variable $o_t \in \{t,t+1,\dots\}\cup\{\infty\}$ indicating when the label $y_t$ becomes available. The event $o_t=\infty$ represents that $y_t$ is never observed. The set of labeled examples available by time $t$ is
\begin{equation}
\mathcal{S}_t \;=\; \{(x_i,y_i)\;:\; 1 \le i \le t \ \text{and}\ o_i \le t\}.
\label{eq33}
\end{equation}
In this definition, $i$ indexes an earlier time step at which input $x_i$ was observed, the condition $o_i \le t$ indicates that its label $y_i$ has arrived by time $t$, and $\mathcal{S}_t$ is the only supervised data accessible for calibration, adaptation, and retraining decisions at time $t$. The evolving data-generating process is represented by a time-indexed distribution $\mathcal{D}_t$ for pairs $(x_t,y_t)$, allowing temporal, covariate, label, concept, or subgroup shifts.

\subsection{Predictive model and outputs}
\label{sec:base_model}

The deployed predictor is parameterized by $\theta$ and outputs a categorical distribution:
\begin{equation}
p_{\theta}(y \mid x) \;=\; \mathrm{softmax}\!\big(h_{\theta}(x)\big).
\end{equation}
Here $h_{\theta}:\mathcal{X}\rightarrow\mathbb{R}^{|\mathcal{Y}|}$ maps an input $x$ to logits, $\mathrm{softmax}(\cdot)$ converts logits into class probabilities, $p_{\theta}(y \mid x)$ is the predicted probability assigned to label $y\in\mathcal{Y}$, and $\theta$ denotes all model parameters. The system's point prediction is
\begin{equation}
\hat{y}_{\theta}(x) \;=\; \arg\max_{y\in\mathcal{Y}} p_{\theta}(y\mid x),
\end{equation}
where $\hat{y}_{\theta}(x)$ is the most probable class under the predictive distribution.

\subsection{Drift evidence from unlabeled monitoring}
\label{sec:drift_evidence}

To sense drift without requiring immediate labels, the system maintains two unlabeled buffers: a reference buffer $\mathcal{B}_{\mathrm{ref}}$ representing nominal conditions and a recent buffer $\mathcal{B}_t$ of size $n$ containing the most recent inputs,
\begin{equation}
\mathcal{B}_t \;=\; \{x_{t-n+1},\dots,x_t\}.
\end{equation}
A suite of monitoring functions computes statistics comparing these buffers, producing an evidence vector
\begin{equation}
z_t \;=\; \big(M_1(\mathcal{B}_t,\mathcal{B}_{\mathrm{ref}}),\dots,M_m(\mathcal{B}_t,\mathcal{B}_{\mathrm{ref}})\big) \ \in\ \mathbb{R}^{m}.
\end{equation}
In this equation, $M_j(\cdot,\cdot)$ denotes the $j$-th monitor, such as a representation discrepancy measure, a two-sample classifier score, or an uncertainty shift statistic. The vector $z_t$ aggregates heterogeneous drift signals into a fixed-dimensional summary that can be computed online and passed to the belief model and controller.

\subsection{Interventions and action space}
\label{sec:actions}

At each time step $t$, the system selects an intervention $a_t$ from a discrete action set $\mathcal{A}$. Actions represent operational responses available in real deployments, including inexpensive adjustments (e.g., recalibration), medium-cost responses (e.g., test-time adaptation), and heavy-weight interventions (e.g., retraining or rollback). Table~\ref{tab:action_space_costs} defines each action precisely and assigns a normalized operational cost used throughout the paper.

\begin{table}[htbp]
\centering
\caption{\textbf{Action space and normalized operational costs used by the controller.} }
\label{tab:action_space_costs}
\small
\begin{tabular}{p{0.10\linewidth} p{0.20\linewidth} p{0.44\linewidth} p{0.18\linewidth}}
\hline
Action & Name & Operational effect & Cost (units) \\
\hline
$A_0$ & No-op & Continue inference with current parameters $\theta$ and current output policy. & $0.0$ \\
$A_1$ & Recalibrate & Update a lightweight calibrator using $\mathcal{S}_t$; logits unchanged. & $0.2$ \\
$A_2$ & Test-time adapt & Apply a small number of adaptation steps on $\mathcal{B}_t$; update $\theta \leftarrow \theta'$. & $1.0$ \\
$A_3$ & Query labels & Request $k_t$ labels from recent inputs; add to $\mathcal{S}_{t'}$ when they arrive. & $0.05\,k_t$ \\
$A_4$ & Retrain & Trigger retraining using $\mathcal{S}_t$ plus logged data; deploy updated $\theta$. & $12.0$ \\
$A_5$ & Rollback & Revert to a previously validated checkpoint $\theta_{\mathrm{safe}}$. & $1.5$ \\
$A_6$ & Abstain/handoff & Abstain on selected inputs and route to fallback (e.g., human review), reducing coverage. & $0.3$ \\
\hline
\end{tabular}
\end{table}

\subsection{Cost model}
\label{sec:costs}

We represent deployment burden with a nonnegative cost function $c:\mathcal{A}\rightarrow\mathbb{R}_{\ge 0}$. To reflect distinct operational constraints, we decompose cost into labeling, compute, and latency components:
\begin{equation}
c(a_t) \;=\; c_{\mathrm{label}}(a_t)\;+\;c_{\mathrm{comp}}(a_t)\;+\;c_{\mathrm{lat}}(a_t).
\end{equation}
In this equation, $c_{\mathrm{label}}(a_t)$ quantifies human annotation or review effort (relevant for $A_3$ and potentially $A_6$), $c_{\mathrm{comp}}(a_t)$ measures computational overhead (dominant for $A_2$ and $A_4$), and $c_{\mathrm{lat}}(a_t)$ captures latency or throughput degradation induced by the intervention. The cumulative operational cost over the deployment horizon is
\begin{equation}
C_{\mathrm{tot}} \;=\; \sum_{t=1}^{T} c(a_t),
\end{equation}
where $c(a_t)$ is the per-step cost and $C_{\mathrm{tot}}$ summarizes the overall resource and process burden.

\subsection{Safety objective under drift}
\label{sec:safety_objective}

We evaluate predictive quality using a bounded loss function $\ell(\hat{y},y)\in[0,1]$, such as the zero-one loss for classification. The population risk at time $t$ is
\begin{equation}
R_t \;=\; \mathbb{E}_{(x,y)\sim\mathcal{D}_t}\Big[\ell\big(\hat{y}_{\theta}(x),y\big)\Big].
\end{equation}
In this definition, $\mathcal{D}_t$ is the time-dependent deployment distribution, $\hat{y}_{\theta}(x)$ is the model's predicted label under parameters $\theta$, and $\ell(\cdot,\cdot)$ is the task loss. The primary safety requirement is to keep risk below a target threshold $\tau$ whenever the system produces an automated prediction:
\begin{equation}
R_t \;\le\; \tau.
\end{equation}
When the system can abstain and hand off uncertain cases, it uses an acceptance function $s_t:\mathcal{X}\rightarrow\{0,1\}$, where $s_t(x)=1$ indicates that the system predicts and $s_t(x)=0$ indicates abstention. The selective risk and coverage are
\begin{equation}
R_t^{\mathrm{sel}} \;=\; \mathbb{E}_{(x,y)\sim\mathcal{D}_t}\Big[\ell\big(\hat{y}_{\theta}(x),y\big)\ \big|\ s_t(x)=1\Big],
\end{equation}
\begin{equation}
\kappa_t \;=\; \mathbb{P}_{(x,y)\sim\mathcal{D}_t}\big(s_t(x)=1\big).
\end{equation}
In these equations, $R_t^{\mathrm{sel}}$ measures error conditional on the system choosing to predict, and $\kappa_t$ is the fraction of inputs receiving automated predictions. The safety requirement with abstention is
\begin{equation}
R_t^{\mathrm{sel}} \;\le\; \tau
\qquad \text{and} \qquad
\kappa_t \;\ge\; \kappa_{\min},
\end{equation}
where $\kappa_{\min}$ enforces a minimum coverage level, preventing degenerate solutions that satisfy safety by abstaining on nearly all inputs.

\subsection{Evaluation metrics}
\label{sec:metrics}

We evaluate monitoring-and-control methods using metrics that reflect both reliability and operational constraints. In controlled streaming evaluations, we denote by $t_0$ the onset time of a drift event. Let $\mathrm{Alarm}_t\in\{0,1\}$ indicate whether the monitoring stack triggers an alarm at time $t$. The detection delay is
\begin{equation}
T_{\mathrm{det}} \;=\; \min\{t \ge t_0 : \mathrm{Alarm}_t=1\} \;-\; t_0.
\end{equation}
Here $T_{\mathrm{det}}$ is the number of time steps between drift onset and the first alarm, measuring how quickly the monitoring layer detects change.

To measure how quickly the system returns to safe operation after drift, we define a deployed risk metric $R_t^{\star}$ that equals $R_t$ when the system always predicts and equals $R_t^{\mathrm{sel}}$ when abstention is enabled. The recovery time is
\begin{equation}
T_{\mathrm{rec}} \;=\; \min\{t \ge t_0 : R_t^{\star} \le \tau\} \;-\; t_0.
\end{equation}
In this equation, $T_{\mathrm{rec}}$ counts the time steps from drift onset until the deployed risk metric falls below the safety threshold $\tau$.

We quantify unsafe operation through the number of safety violations over the horizon:
\begin{equation}
V \;=\; \sum_{t=1}^{T} \mathbb{I}\!\left(R_t^{\star}>\tau\right).
\end{equation}
Here $\mathbb{I}(\cdot)$ is an indicator function, and $V$ counts the total number of time steps where deployed risk exceeds the target threshold.

To assess subgroup robustness under drift, we assume a set of groups $\mathcal{G}$, such as demographic groups, domains, or discovered clusters. The worst-group accuracy at time $t$ is
\begin{equation}
\mathrm{Acc}_t^{\mathrm{wg}} \;=\; \min_{g\in\mathcal{G}} \ \mathbb{P}_{(x,y)\sim\mathcal{D}_t}\big(\hat{y}_{\theta}(x)=y \ \big|\ g(x)=g\big).
\end{equation}
In this definition, $g(x)$ denotes the group membership of input $x$, the conditional probability measures accuracy within group $g$, and the minimum over $\mathcal{G}$ isolates the most adversely affected group at time $t$. Alongside $T_{\mathrm{det}}$, $T_{\mathrm{rec}}$, and $V$, we report the cumulative operational cost $C_{\mathrm{tot}}$ from Eq.~(\ref{eq8}), yielding a joint view of reliability, responsiveness, and operational burden.

\section{Algorithm}
\label{sec:algorithm}

Algorithm~\ref{alg:controller} summarizes the complete online loop. The loop updates monitoring signals, drift-type belief, and the active risk certificate; it then selects an action under feasibility constraints, applies the action, and logs outcomes for auditing and later analysis.

\begin{algorithm}[!t]
\caption{Drift-to-Action Control with Active Risk Certificate}
\label{alg:controller}
\KwIn{Reference buffer $\mathcal{B}_{\mathrm{ref}}$, monitor window size $n$, certificate window size $N$, risk target $\tau$, failure level $\delta$, labeling budget $B_{\mathrm{lab}}$, retrain cooldown $\Delta_{\mathrm{rt}}$, rollback cooldown $\Delta_{\mathrm{rb}}$, cost tradeoff $\lambda$, gain table $G(d,a)$}
\KwOut{Action sequence $\{a_t\}_{t=1}^{T}$ and audit log $\mathcal{L}$}

Initialize model parameters $\theta$\;
Initialize calibrator parameters $\eta$\;
Initialize drift-type belief $b_0(d)$ for all $d\in\mathcal{D}$ with $\sum_{d\in\mathcal{D}} b_0(d)=1$\;
Initialize labeled set $\mathcal{S}_0 \leftarrow \emptyset$ and audit log $\mathcal{L}\leftarrow \emptyset$\;
Initialize last intervention times $t_{\mathrm{last}}^{\mathrm{rt}}\leftarrow -\infty$ and $t_{\mathrm{last}}^{\mathrm{rb}}\leftarrow -\infty$\;
Initialize remaining labeling budget $B \leftarrow B_{\mathrm{lab}}$\;

\For{$t\leftarrow 1$ \KwTo $T$}{
Receive input $x_t$ and update recent buffer $\mathcal{B}_t=\{x_{t-n+1},\dots,x_t\}$\;
Compute evidence vector $z_t=\big(M_1(\mathcal{B}_t,\mathcal{B}_{\mathrm{ref}}),\dots,M_m(\mathcal{B}_t,\mathcal{B}_{\mathrm{ref}})\big)$\;
Update belief $b_t(d)=\mathbb{P}(D_t=d\mid z_{1:t})$ for all $d\in\mathcal{D}$\;

Construct certifiable window $W_t=\{t-d-N+1,\dots,t-d\}$\;
Choose audit size $n_t \leftarrow \mathrm{QueryPolicy}(z_t,b_t,U_{t-1}(\delta),B)$\;
Uniformly sample audit indices $Q_t \subseteq W_t$ with $|Q_t|=n_t$\;

$J_t \leftarrow Q_t \setminus \{i:\ (x_i,y_i)\in\mathcal{S}_{t-1}\}$\;
Reveal/request labels $\{y_i : i\in J_t\}$ and update budget $B\leftarrow B-|J_t|$\;
Update $\mathcal{S}_t \leftarrow \mathcal{S}_{t-1}\cup\{(x_i,y_i): i\in J_t\}$\;

Compute queried losses $\ell_i=\ell(\hat{y}_{\theta}(x_i),y_i)$ for all $i\in Q_t$\;
Compute empirical audited risk $\widehat{R}_t=\frac{1}{|Q_t|}\sum_{i\in Q_t}\ell_i$\;
Set $\delta_t=\frac{6\delta}{\pi^2 t^2}$\;
Compute certificate $U_t(\delta)=\widehat{R}_t+\mathrm{rad}(|Q_t|,\delta_t)$\;

\eIf{$U_t(\delta)>\tau$}{
Set action $a_t \leftarrow A_6$ \tcp*[r]{abstain/handoff}
\If{$t-t_{\mathrm{last}}^{\mathrm{rb}}\ge \Delta_{\mathrm{rb}}$}{
Schedule rollback and update $t_{\mathrm{last}}^{\mathrm{rb}}\leftarrow t$\;
}
\ElseIf{$t-t_{\mathrm{last}}^{\mathrm{rt}}\ge \Delta_{\mathrm{rt}}$}{
Schedule retraining and update $t_{\mathrm{last}}^{\mathrm{rt}}\leftarrow t$\;
}
}{
Form feasible action set $\mathcal{A}_t^{\mathrm{feas}}\subseteq\mathcal{A}$ using cooldowns and remaining budget $B$\;
\ForEach{$a\in \mathcal{A}_t^{\mathrm{feas}}$}{
Compute expected benefit $\Delta_t(a)=\sum_{d\in\mathcal{D}} b_t(d)\,G(d,a)$\;
Compute utility $\mathcal{U}_t(a)=\Delta_t(a)-\lambda\,c(a)$\;
}
Choose action $a_t \leftarrow \arg\max_{a\in\mathcal{A}_t^{\mathrm{feas}}}\mathcal{U}_t(a)$\;
}

Apply action $a_t$:\;
\If{$a_t=A_1$}{Update calibrator $\eta$ using $\mathcal{S}_t$\;}
\If{$a_t=A_2$}{Perform test-time adaptation on recent inputs; update $\theta\leftarrow\theta'$\;}
\If{$a_t=A_4$}{Retrain model using $\mathcal{S}_t$ and logged data; deploy updated $\theta$\;}
\If{$a_t=A_5$}{Rollback to checkpoint $\theta_{\mathrm{safe}}$ and associated calibrator\;}
\If{$a_t=A_6$}{Abstain/handoff using policy $s_t(x)$\;}

Log $(t,z_t,b_t,U_t(\delta),a_t,c(a_t),k_t)$ into $\mathcal{L}$\;
}
\end{algorithm}

\section{Experiments}
\label{sec:experiments}
We evaluate drift monitoring and response in a streaming setting where data arrive sequentially and the deployed system must decide when to recalibrate, adapt, request labels, retrain, roll back, or fall back (see Appendix \ref{app:impl} for more details). The stream is constructed by concatenating either time-ordered chunks (when timestamps exist) or environment-ordered chunks (when datasets are organized by domains). Let $\{\mathcal{E}_1,\dots,\mathcal{E}_K\}$ denote environments (e.g., hospitals in Camelyon17 or domains in DomainNet). The stream assigns each interval to one environment:
\begin{equation}
(x_t,y_t)\sim \mathcal{D}_t,
\qquad
\mathcal{D}_t=\mathcal{D}^{(k)}\ \ \text{for}\ \ t\in\{T_{k-1}+1,\dots,T_k\}.
\end{equation}
In this equation, $\mathcal{D}^{(k)}$ is the environment-specific distribution for $\mathcal{E}_k$, $T_k$ is the final time index of environment $k$, and $T_0=0$. This protocol produces realistic distribution transitions while preserving within-environment temporal coherence.

To test robustness beyond discrete domain switches, we also use mixture-controlled drift schedules that interpolate between a nominal distribution $\mathcal{D}^{\mathrm{ref}}$ and a shifted distribution $\mathcal{D}^{\mathrm{shift}}$:
\begin{equation}
\mathcal{D}_t \;=\; (1-\alpha_t)\,\mathcal{D}^{\mathrm{ref}} \;+\; \alpha_t\,\mathcal{D}^{\mathrm{shift}}.
\end{equation}
Here $\alpha_t\in[0,1]$ is a time-varying drift intensity, with $\alpha_t=0$ representing nominal conditions and $\alpha_t=1$ representing fully shifted conditions. We instantiate three drift patterns via $\alpha_t$. Sudden drift uses $\alpha_t=\mathbb{I}(t\ge t_0)$ where $t_0$ is drift onset. Gradual drift uses a sigmoid ramp
\begin{equation}
\alpha_t \;=\; \frac{1}{1+\exp\!\big(-\rho\,(t-t_0)\big)},
\end{equation}
where $\rho>0$ controls the ramp speed. Recurring drift uses a periodic schedule
\begin{equation}
\alpha_t \;=\; \frac{1}{2}\Big(1+\sin(2\pi t/P)\Big),
\end{equation}
where $P$ is the period length in time steps, producing repeated returns to nominal conditions and repeated drift episodes.

Operational constraints are integrated directly into the evaluation. Each action $a_t\in\mathcal{A}$ incurs a normalized cost $c(a_t)$ and heavy interventions are rate-limited by cooldowns. The total deployment burden is measured by
\begin{equation}
C_{\mathrm{tot}} \;=\; \sum_{t=1}^{T} c(a_t),
\end{equation}
where $T$ is the stream length. We also simulate realistic annotation pipelines through label delay. If the controller requests the label for index $i$, the label becomes available after $d$ steps:
\begin{equation}
o_i \;=\; i + d,
\label{eq32}
\end{equation}
where $o_i$ is the observation time at which $y_i$ arrives. This delay affects both calibration monitoring and risk certification, because losses can only be computed once labels are observed. Table~\ref{tab:protocol_costs} specifies the concrete action costs, prerequisites, and cooldowns used by all methods in the protocol, and it is referenced by the controller when enforcing feasibility constraints.


\begin{wraptable}[16]{r}{0.4\linewidth}
\captionsetup{
  font=footnotesize,
  skip=2pt,
  width=\linewidth,          
  format=plain,
  justification=raggedright,
  singlelinecheck=false,
  margin=0pt
}
\caption{\textbf{Streaming protocol action costs and prerequisites.} Normalized costs and cooldowns.}
\label{tab:protocol_costs}

\footnotesize
\setlength{\tabcolsep}{1.5pt}
\renewcommand{\arraystretch}{0.75}
\begin{tabular}{@{}l l p{1.75cm} c c@{}}
\toprule
Act. & Name & Prereq./Effect & Cooldown & Cost \\
\midrule
$A_0$ & No-op    & Keep $\theta$. & -- & $0.0$ \\
$A_1$ & Recalib. & Update from $\mathcal{S}_t$. & -- & $0.2$ \\
$A_2$ & TTA      & Adapt on $\mathcal{B}_t$. & -- & $1.0$ \\
$A_3$ & Query    & Request $k_t$; delay $d$. & -- & $0.05k_t$ \\
$A_4$ & Retrain  & Retrain + deploy. & $\Delta_{\mathrm{rt}}$ & $12.0$ \\
$A_5$ & Rollback & Restore $\theta_{\mathrm{safe}}$. & $\Delta_{\mathrm{rb}}$ & $1.5$ \\
$A_6$ & Abstain  & Handoff; less coverage. & -- & $0.3$ \\
\bottomrule
\end{tabular}
\end{wraptable}

\section{Full theoretical guarantee}
\label{app:confidence_sequence}

This appendix provides a full, self-contained guarantee for the active risk certificate used in Section~\ref{sec:active_risk_certificate}. We first state the probabilistic model under which the certificate is valid, then derive an anytime-valid upper confidence sequence for the mean of bounded losses under adaptive (optional-stopping) label querying, and finally extend the result to selective risk when abstention is enabled.

\subsection{Windowed risk, random sampling, and filtration}
\label{app:setup}

Fix a time $t$ and a window length $N\in\mathbb{N}$. The current deployment regime is represented by the index set
\begin{equation}
W_t \;=\; \{t-N+1,\dots,t\}.
\end{equation}
In this equation, $t$ is the current time index and $W_t$ contains the most recent $N$ indices.

Each index $i\in W_t$ corresponds to an example $(x_i,y_i)$ and an associated bounded loss
\begin{equation}
L_{t,i} \;=\; \ell\!\big(\hat{y}_{\theta}(x_i),y_i\big),
\qquad
0 \le L_{t,i} \le 1.
\end{equation}
In this equation, $\ell(\cdot,\cdot)$ is the task loss, $\hat{y}_{\theta}(x_i)$ is the model prediction at index $i$, and the bound $[0,1]$ holds for standard classification losses such as $\mathbb{I}(\hat{y}_{\theta}(x_i)\neq y_i)$.

The windowed (finite-population) risk is the mean loss over the current window:
\begin{equation}
R_t \;=\; \frac{1}{N}\sum_{i\in W_t} L_{t,i}.
\end{equation}
In this equation, $R_t$ is the quantity we want to upper bound online.

The certificate queries $n_t$ labels by sampling indices uniformly at random from $W_t$. Let
\begin{equation}
I_{t,1},I_{t,2},\dots,I_{t,n_t} \;\sim\; \mathrm{Unif}(W_t),
\end{equation}
where $\mathrm{Unif}(W_t)$ denotes the uniform distribution over the set $W_t$. The sampling may be done with replacement; sampling without replacement is discussed in Section~\ref{app:serfling}. The observed queried losses are
\begin{equation}
X_{t,j} \;=\; L_{t,I_{t,j}},
\qquad j\in\{1,\dots,n_t\}.
\end{equation}
In this equation, $X_{t,j}$ is the loss of the $j$-th randomly sampled index within the window at time $t$.

The certificate uses the empirical mean of the queried losses,
\begin{equation}
\widehat{R}_{t,n} \;=\; \frac{1}{n}\sum_{j=1}^{n} X_{t,j},
\end{equation}
where $n$ is the number of queried labels actually used in the estimate.

To capture adaptive label querying, we define a filtration that represents all information available after $n$ queried labels in window $t$:
\begin{equation}
\mathcal{F}_{t,n} \;=\; \sigma\!\left(z_{1:t},\, I_{t,1},X_{t,1},\,\dots,\, I_{t,n},X_{t,n}\right).
\end{equation}
In this equation, $z_{1:t}$ is the monitor history and $\sigma(\cdot)$ denotes the $\sigma$-algebra generated by the listed random variables. The query count $n_t$ is allowed to be an $\mathcal{F}_{t,n}$-adapted (data-dependent) stopping rule, meaning $n_t$ can depend on past observed evidence and queried labels.

The key validity condition used throughout this appendix is the following.

\textbf{Condition A (Random auditing within the current window).}
At each time $t$, the queried indices $I_{t,1},\dots,I_{t,n_t}$ are sampled uniformly from $W_t$, and the sampling mechanism is independent of the unobserved labels given the already observed history.

This condition allows the certificate to treat the queried losses as an unbiased sample from the window mean risk $R_t$, even when the decision to query more labels is adaptive.

\subsection{A time-uniform confidence sequence for bounded means}
\label{app:cs}

We derive an anytime-valid upper confidence sequence for a bounded mean under adaptive stopping. The statement is formulated for a generic bounded sequence and then specialized to the windowed risk setting.

Let $X_1,X_2,\dots$ be random variables satisfying
\begin{equation}
0 \le X_j \le 1,
\qquad
\mathbb{E}[X_j]=\mu,
\qquad
\text{and $X_j$ are i.i.d.}.
\end{equation}
In this equation, $\mu$ is the unknown mean to be bounded. Define the sample mean
\begin{equation}
\widehat{\mu}_n \;=\; \frac{1}{n}\sum_{j=1}^{n} X_j.
\end{equation}
In this equation, $n$ is the number of samples and $\widehat{\mu}_n$ is their empirical average.

We use Hoeffding's inequality at each fixed $n$:
\begin{equation}
\mathbb{P}\!\left(\mu \ge \widehat{\mu}_n + \epsilon\right)
\;\le\;
\exp\!\left(-2n\epsilon^2\right).
\end{equation}
In this inequality, $\epsilon>0$ is a deviation level and the bound relies only on the fact that $X_j\in[0,1]$.

To make the bound \emph{anytime-valid} over all $n$, we allocate failure probability across time using a summable schedule. Let
\begin{equation}
\delta_n \;=\; \frac{6\delta}{\pi^2 n^2},
\qquad
\delta\in(0,1).
\end{equation}
In this equation, $\delta$ is the target overall failure probability and $\delta_n$ is the portion allocated to sample size $n$. The constant is chosen so that the sum is bounded:
\begin{equation}
\sum_{n=1}^{\infty}\delta_n
\;=\;
\frac{6\delta}{\pi^2}\sum_{n=1}^{\infty}\frac{1}{n^2}
\;=\;
\delta.
\end{equation}
In this equation, $\sum_{n\ge 1}1/n^2=\pi^2/6$ is the Basel identity, ensuring the total allocated failure probability is exactly $\delta$.

Define the confidence radius
\begin{equation}
\mathrm{rad}(n,\delta)
\;=\;
\sqrt{\frac{1}{2n}\log\!\left(\frac{1}{\delta_n}\right)}
\;=\;
\sqrt{\frac{1}{2n}\log\!\left(\frac{\pi^2 n^2}{6\delta}\right)}.
\end{equation}
In this equation, $n$ is the sample size and $\mathrm{rad}(n,\delta)$ shrinks as $n$ increases.

\begin{lemma}\label{lemmaA.1}
\textbf{(Anytime-valid upper confidence sequence via stitching).} 
For any $\delta\in(0,1)$ and the radius $\mathrm{rad}(n,\delta)$ defined above, it holds that
\begin{equation}
\mathbb{P}\!\left(\forall n\ge 1:\ \mu \le \widehat{\mu}_n + \mathrm{rad}(n,\delta)\right) \;\ge\; 1-\delta.
\end{equation}
In this probability statement, the event inside holds \emph{simultaneously} for all sample sizes $n$, which implies validity under optional stopping.
\end{lemma}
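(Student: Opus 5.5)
We prove Lemma~\ref{lemmaA.1} with a union bound over sample sizes. For each fixed $n$ we use the one-sided Hoeffding inequality stated above, choosing the deviation level so that the failure probability at that $n$ is exactly $\delta_n$. The Basel identity then makes the total failure probability over all $n$ at most $\delta$.

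\textbf{Step 1: calibrate the radius for each fixed $n$.} Fix $n\ge 1$ and set $\epsilon_n=\mathrm{rad}(n,\delta)=\sqrt{\log(1/\delta_n)/(2n)}$. Since $\delta_n\le 6\delta/\pi^2<1$, we have $\log(1/\delta_n)>0$, so $\epsilon_n>0$ and Hoeffding applies. We define the bad event
\begin{equation*}
E_n=\{\mu>\widehat{\mu}_n+\epsilon_n\}\subseteq\{\mu\ge\widehat{\mu}_n+\epsilon_n\}.
\end{equation*}
Hoeffding then gives
\begin{equation*}
\mathbb{P}(E_n)\le\exp(-2n\epsilon_n^2)=\exp(-\log(1/\delta_n))=\delta_n.
\end{equation*}

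\textbf{Step 2: union bound over all $n$.} The complement of the target event $\{\forall n\ge1:\ \mu\le\widehat{\mu}_n+\mathrm{rad}(n,\delta)\}$ is exactly $\bigcup_{n\ge1}E_n$. Countable subadditivity and Step 1 give
\begin{equation*}
\mathbb{P}\Big(\bigcup_{n\ge 1}E_n\Big)\le\sum_{n\ge1}\delta_n=\frac{6\delta}{\pi^2}\cdot\frac{\pi^2}{6}=\delta,
\end{equation*}
and the claimed bound $1-\delta$ follows.

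\textbf{Step 3: optional stopping.} The bound holds simultaneously for all $n$ on a single event of probability at least $1-\delta$. So for any random, data-dependent index $n_t$ (stopping time or not), $\mu\le\widehat{\mu}_{n_t}+\mathrm{rad}(n_t,\delta)$ holds on that same event. No martingale argument is needed.

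The only step I expect to need care is checking that Hoeffding's inequality is legitimately applied at every fixed $n$. Each $\widehat{\mu}_n$ is an average of the first $n$ terms of a fixed i.i.d.\ $[0,1]$-valued sequence, and the adaptivity enters only through which $n$ is eventually read off. The lemma is stated for i.i.d.\ variables, so this holds directly. For the later specialization under Condition~A (with-replacement uniform sampling from $W_t$), the draws $X_{t,j}$ are i.i.d.\ with mean $R_t$ conditionally on the window, and the same argument applies conditionally.
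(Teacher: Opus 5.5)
Your proof is correct and matches the paper's argument essentially step for step: you plug $\epsilon=\mathrm{rad}(n,\delta)$ into the fixed-$n$ Hoeffding bound to get failure probability $\delta_n$, union-bound over $n$ using $\sum_{n}\delta_n=\delta$, and obtain optional-stopping validity from the fact that the bound holds for all $n$ simultaneously. Your extra checks are harmless refinements: you note that $\delta_n<1$, so the radius is positive, and you define $E_n$ with a strict inequality, so it is exactly the complement of the target event, whereas the paper uses the slightly larger event $\{\mu\ge\widehat{\mu}_n+\mathrm{rad}(n,\delta)\}$.
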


\textit{Proof.}
For any fixed $n$, set $\epsilon=\mathrm{rad}(n,\delta)$ in Hoeffding's inequality to obtain
\begin{equation}
\mathbb{P}\!\left(\mu \ge \widehat{\mu}_n + \mathrm{rad}(n,\delta)\right)
\;\le\;
\exp\!\left(-2n\,\mathrm{rad}(n,\delta)^2\right)
\;=\;
\exp\!\left(-\log\!\left(\frac{1}{\delta_n}\right)\right)
\;=\;
\delta_n.
\end{equation}
In this equation chain, the first step applies Hoeffding, the second step substitutes the definition of $\mathrm{rad}(n,\delta)$, and the last step simplifies the exponential.

Define the bad event at time $n$ by
\begin{equation}
E_n \;=\; \left\{\mu \ge \widehat{\mu}_n + \mathrm{rad}(n,\delta)\right\}.
\end{equation}
Then, by the union bound,
\begin{equation}
\mathbb{P}\!\left(\exists n\ge 1:\ E_n\right)
\;\le\;
\sum_{n=1}^{\infty}\mathbb{P}(E_n)
\;\le\;
\sum_{n=1}^{\infty}\delta_n
\;=\;
\delta.
\end{equation}
In this bound, the first inequality uses the union bound, the second uses $\mathbb{P}(E_n)\le\delta_n$, and the last uses the summability property shown earlier. Taking complements yields the claim. \hfill $\square$

A crucial consequence of \lemref{lemmaA.1} is optional-stopping validity. Let $\widehat{n}$ be any (possibly random) stopping time that depends on the observed data, such as an adaptive query rule that stops once the bound is sufficiently tight. Since the event in \lemref{lemmaA.1} holds for all $n$ simultaneously, it also holds at $n=\widehat{n}$:
\begin{equation}
\mathbb{P}\!\left(\mu \le \widehat{\mu}_{\widehat{n}} + \mathrm{rad}(\widehat{n},\delta)\right)\;\ge\; 1-\delta.
\end{equation}
In this equation, $\widehat{n}$ is data-dependent, yet the bound remains valid because the confidence sequence is time-uniform.

\subsection{Applying the confidence sequence to windowed risk certificates}
\label{app:apply}

We now map the generic result to the windowed risk certificate at time $t$. Under Condition~A, the queried losses $X_{t,1},\dots,X_{t,n}$ satisfy
\begin{equation}
0 \le X_{t,j} \le 1,
\qquad
\mathbb{E}[X_{t,j}\mid W_t] \;=\; R_t,
\end{equation}
where $R_t$ is the window mean risk. The empirical risk based on $n$ queried labels is
\begin{equation}
\widehat{R}_{t,n} \;=\; \frac{1}{n}\sum_{j=1}^{n}X_{t,j}.
\end{equation}
In this equation, $\widehat{R}_{t,n}$ is the empirical mean of queried losses within the window at time $t$.

Define the per-time certificate using the confidence radius:
\begin{equation}
U_{t,n}(\delta_t)
\;=\;
\widehat{R}_{t,n} + \mathrm{rad}(n,\delta_t),
\end{equation}
where $\delta_t\in(0,1)$ is the failure probability budget assigned to time $t$.

To obtain a guarantee that holds simultaneously over all times $t\in\{1,\dots,T\}$, we allocate the overall failure probability $\delta$ across time. A convenient summable choice is
\begin{equation}
\delta_t \;=\; \frac{6\delta}{\pi^2 t^2},
\qquad
t\ge 1,
\end{equation}
which satisfies $\sum_{t\ge 1}\delta_t=\delta$.

Let $n_t$ be the (possibly adaptive) number of labels queried in window $t$, and define the deployed certificate as
\begin{equation}
U_t(\delta)
\;=\;
\widehat{R}_{t,n_t} + \mathrm{rad}(n_t,\delta_t).
\end{equation}
In this equation, $U_t(\delta)$ is computed from the queried sample mean $\widehat{R}_{t,n_t}$ and the anytime-valid radius evaluated at the random sample size $n_t$.

\setcounter{theorem}{1}

\begin{theorem}\label{thmA.2}
\textbf{(Anytime-valid window risk certificate under adaptive querying).}
Assume Condition~A holds for each time $t$, and the loss is bounded as $L_{t,i}\in[0,1]$. Then with probability at least $1-\delta$,
\begin{equation}
\forall t\ge 1:\quad R_t \;\le\; U_t(\delta).
\end{equation}
In this inequality, the bound holds simultaneously for all times $t$ and remains valid even when $n_t$ is chosen adaptively from past evidence and previously observed labels.
\end{theorem}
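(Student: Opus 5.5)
The plan is to combine the per-time anytime-valid guarantee of \lemref{lemmaA.1} with a union bound over times $t$, using the schedule $\delta_t=6\delta/(\pi^2 t^2)$. For each fixed $t$, condition on the window $W_t$, i.e.\ treat the finite population of losses $\{L_{t,i}\}_{i\in W_t}$ as fixed. Together with the monitor history $z_{1:t}$ and everything observed before the audit at time $t$, this gives a conditioning $\sigma$-algebra $\mathcal{G}_t$.

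Under Condition~A with replacement, the draws $I_{t,1},I_{t,2},\dots$ are i.i.d.\ uniform on $W_t$ given $\mathcal{G}_t$. Hence the $X_{t,j}=L_{t,I_{t,j}}$ are conditionally i.i.d.\ in $[0,1]$ with conditional mean exactly $R_t$. This is the finite-population fact that makes the window mean, not a population mean, the target. I will conceptually extend the sampling to an infinite i.i.d.\ sequence, of which only the first $n_t$ terms are revealed; this lets me apply a statement about all $n$ simultaneously.

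Applying \lemref{lemmaA.1} conditionally on $\mathcal{G}_t$, with $\delta$ replaced by $\delta_t$, gives
\begin{equation}
\mathbb{P}\!\left(\exists n\ge 1:\ R_t > \widehat{R}_{t,n}+\mathrm{rad}(n,\delta_t)\ \middle|\ \mathcal{G}_t\right)\le \delta_t .
\end{equation}
Taking expectations removes the conditioning. Because the event covers every $n$, it also covers the realized $n_t$, however $n_t$ was chosen. This holds whether $n_t$ is a stopping time for $\mathcal{F}_{t,n}$ or depends on $z_{1:t}$ and past labels. So $\mathbb{P}(R_t>U_t(\delta))\le\delta_t$.

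A union bound over $t\ge1$ then gives failure probability at most $\sum_t\delta_t=\delta$. No independence across $t$ is needed, so overlapping windows and shared labels (the set $J_t$ in Algorithm~\ref{alg:controller}) cause no difficulty. Taking complements yields the theorem. If sampling is without replacement, I would either use the fact that Hoeffding's bound still holds, by Hoeffding's reduction for sampling without replacement (or Serfling), or defer to Section~\ref{app:serfling}.

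The main obstacle is justifying the conditional i.i.d.\ structure when the query rule and model depend on the past. Three points need care. First, $\theta$, and hence the $L_{t,i}$, must be $\mathcal{G}_t$-measurable, meaning the loss is evaluated with the parameters fixed when the window risk is defined. Second, labels for indices that were reused from $\mathcal{S}_{t-1}$ must not bias the sample; this is fine because the indices are drawn uniformly regardless of which labels are already known. Third, adaptivity in choosing $n_t$ must not affect the draws themselves. My first attempt would state Condition~A precisely as: given $\mathcal{G}_t$, the index sequence is i.i.d.\ uniform and independent of the unrevealed labels. The rest then follows mechanically from \lemref{lemmaA.1}.
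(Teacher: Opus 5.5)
Your proposal is correct and follows essentially the same route as the paper. You condition on the window, apply \lemref{lemmaA.1} with $\delta_t = 6\delta/(\pi^2 t^2)$ so the bound holds uniformly in $n$ and hence at the adaptive $n_t$, and then union-bound over $t$ using $\sum_t \delta_t = \delta$. Your explicit conditioning $\sigma$-algebra $\mathcal{G}_t$, the infinite i.i.d.\ extension of the draws, and the remarks on $\theta$-measurability and sampling without replacement are extra rigor that the paper's proof leaves implicit, not a different argument.
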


\textit{Proof.}
Fix a time $t$. Conditional on the window $W_t$, \lemref{lemmaA.1} applies to the queried losses in that window, so
\begin{equation}
\mathbb{P}\!\left(\forall n\ge 1:\ R_t \le \widehat{R}_{t,n} + \mathrm{rad}(n,\delta_t)\ \Big|\ W_t\right) \;\ge\; 1-\delta_t.
\end{equation}
Since the bound holds for all $n$, it holds at $n=n_t$ even if $n_t$ is chosen adaptively:
\begin{equation}
\mathbb{P}\!\left(R_t \le \widehat{R}_{t,n_t} + \mathrm{rad}(n_t,\delta_t)\right) \;\ge\; 1-\delta_t.
\end{equation}
Define the bad event at time $t$ by
\begin{equation}
B_t \;=\; \left\{R_t > U_t(\delta)\right\}.
\end{equation}
Then $\mathbb{P}(B_t)\le\delta_t$, and applying the union bound across all $t\ge 1$ yields
\begin{equation}
\mathbb{P}\!\left(\exists t\ge 1:\ B_t\right)
\;\le\;
\sum_{t=1}^{\infty}\mathbb{P}(B_t)
\;\le\;
\sum_{t=1}^{\infty}\delta_t
\;=\;
\delta.
\end{equation}
Taking complements gives $\mathbb{P}(\forall t:\ R_t \le U_t(\delta))\ge 1-\delta$. \hfill $\square$

This theorem formalizes the core certificate guarantee used by the controller: even with delayed supervision and adaptive query schedules, as long as auditing samples are uniformly random within the current window, the bound is valid simultaneously over time.

\subsection{Safety gating guarantee for non-abstaining operation}
\label{app:gating}

We now formalize the safety gating claim used in Theorem~1 of the main text. Suppose the system follows the rule that it only issues predictions when the certificate is below the risk threshold $\tau$. Let $\mathrm{Pred}_t\in\{0,1\}$ denote whether the system actually produces an automated prediction at time $t$, with
\begin{equation}
\mathrm{Pred}_t \;=\; \mathbb{I}\!\left(U_t(\delta)\le\tau\right).
\end{equation}
In this equation, $\mathrm{Pred}_t=1$ means the system predicts and $\mathrm{Pred}_t=0$ means it activates fallback (abstain/handoff).

\textbf{Corollary A.3 (No unsafe prediction without certificate violation).}
Under the assumptions of \thmref{thmA.2}, with probability at least $1-\delta$,
\begin{equation}
\forall t\ge 1:\quad \mathrm{Pred}_t=1 \ \Rightarrow\ R_t \le \tau.
\end{equation}
In this implication, the system cannot be in a regime where it both predicts and the true window risk exceeds $\tau$ unless the certificate fails (an event of probability at most $\delta$).

\textit{Proof.}
On the event $\{R_t \le U_t(\delta)\ \text{for all $t$}\}$ from \thmref{thmA.2}, if $\mathrm{Pred}_t=1$ then $U_t(\delta)\le\tau$, and therefore
\begin{equation}
R_t \;\le\; U_t(\delta) \;\le\; \tau.
\end{equation}
This holds for all $t$ simultaneously on the same high-probability event. \hfill $\square$

\subsection{Extension: certified selective risk under abstention}
\label{app:selective}

When abstention is enabled, the deployed performance metric is typically the selective risk, which conditions on the system choosing to predict. Let $s_t(x)\in\{0,1\}$ be the acceptance function at time $t$, where $s_t(x)=1$ indicates that the system predicts and $s_t(x)=0$ indicates abstention. For an index $i\in W_t$, define
\begin{equation}
S_{t,i} \;=\; s_t(x_i),
\qquad
L_{t,i} \;=\; \ell\!\big(\hat{y}_{\theta}(x_i),y_i\big).
\end{equation}
In these equations, $S_{t,i}$ is the accept indicator and $L_{t,i}$ is the loss.

The window coverage is the mean accept rate
\begin{equation}
\kappa_t \;=\; \frac{1}{N}\sum_{i\in W_t} S_{t,i}.
\end{equation}
In this equation, $\kappa_t$ is the fraction of window points on which the system predicts.

The window selective risk is the average loss conditional on acceptance:
\begin{equation}
R_t^{\mathrm{sel}}
\;=\;
\frac{\sum_{i\in W_t} L_{t,i}S_{t,i}}{\sum_{i\in W_t} S_{t,i}}
\;=\;
\frac{m_t}{\kappa_t},
\end{equation}
where we define the numerator mean
\begin{equation}
m_t \;=\; \frac{1}{N}\sum_{i\in W_t} L_{t,i}S_{t,i}.
\end{equation}
In these equations, $m_t$ is the mean \emph{accepted loss mass} and $\kappa_t$ is the mean acceptance rate, so the ratio equals the conditional mean loss.

Under random sampling within the window, each queried index $I_{t,j}$ yields the pair
\begin{equation}
Y_{t,j} \;=\; L_{t,I_{t,j}}S_{t,I_{t,j}},
\qquad
A_{t,j} \;=\; S_{t,I_{t,j}}.
\end{equation}
In these equations, $Y_{t,j}$ is the product of loss and acceptance, and $A_{t,j}$ is the acceptance indicator. Both satisfy bounds $0\le Y_{t,j}\le 1$ and $0\le A_{t,j}\le 1$.

Define the empirical means from $n$ queried labels:
\begin{equation}
\widehat{m}_{t,n} \;=\; \frac{1}{n}\sum_{j=1}^{n} Y_{t,j},
\qquad
\widehat{\kappa}_{t,n} \;=\; \frac{1}{n}\sum_{j=1}^{n} A_{t,j}.
\end{equation}
In these equations, $\widehat{m}_{t,n}$ estimates $m_t$ and $\widehat{\kappa}_{t,n}$ estimates $\kappa_t$.

We build an anytime-valid upper bound for $m_t$ and a lower bound for $\kappa_t$. Using the same confidence radius function $\mathrm{rad}(n,\cdot)$, define
\begin{equation}
U^{m}_{t,n}(\delta_t^{m}) \;=\; \widehat{m}_{t,n} + \mathrm{rad}(n,\delta_t^{m}),
\qquad
L^{\kappa}_{t,n}(\delta_t^{\kappa}) \;=\; \widehat{\kappa}_{t,n} - \mathrm{rad}(n,\delta_t^{\kappa}).
\end{equation}
In these equations, $U^{m}_{t,n}$ is an upper bound on $m_t$ and $L^{\kappa}_{t,n}$ is a lower bound on $\kappa_t$.

The selective-risk certificate is then formed as an upper bound on the ratio:
\begin{equation}
U^{\mathrm{sel}}_{t,n}
\;=\;
\frac{U^{m}_{t,n}(\delta_t^{m})}{\max\!\left(L^{\kappa}_{t,n}(\delta_t^{\kappa}),\,\kappa_{\min}\right)}.
\end{equation}
In this equation, $\kappa_{\min}>0$ is the minimum coverage requirement used in the main text, and the $\max(\cdot,\kappa_{\min})$ term prevents division by a value that is too small.

To allocate overall failure probability, we choose summable schedules $\delta_t^{m}$ and $\delta_t^{\kappa}$ satisfying
\begin{equation}
\sum_{t\ge 1}\delta_t^{m} \;+\; \sum_{t\ge 1}\delta_t^{\kappa} \;\le\; \delta,
\end{equation}
for example by setting $\delta_t^{m}=\delta_t^{\kappa}=3\delta/(\pi^2 t^2)$.

\textbf{Theorem A.4 (Anytime-valid selective-risk certificate).}
Assume Condition~A holds and $L_{t,i}\in[0,1]$. With probability at least $1-\delta$, for all $t\ge 1$,
\begin{equation}
m_t \;\le\; U^{m}_{t,n_t}(\delta_t^{m})
\qquad\text{and}\qquad
\kappa_t \;\ge\; L^{\kappa}_{t,n_t}(\delta_t^{\kappa}),
\end{equation}
and consequently, whenever $\kappa_t\ge \kappa_{\min}$,
\begin{equation}
R_t^{\mathrm{sel}} \;=\; \frac{m_t}{\kappa_t}
\;\le\;
\frac{U^{m}_{t,n_t}(\delta_t^{m})}{\max\!\left(L^{\kappa}_{t,n_t}(\delta_t^{\kappa}),\,\kappa_{\min}\right)}
\;=\;
U^{\mathrm{sel}}_{t,n_t}.
\end{equation}
In these inequalities, $U^{\mathrm{sel}}_{t,n_t}$ is an anytime-valid upper bound on selective risk.

\textit{Proof.}
The variables $\{Y_{t,j}\}_{j\ge 1}$ are bounded in $[0,1]$ with mean $m_t$, and the variables $\{A_{t,j}\}_{j\ge 1}$ are bounded in $[0,1]$ with mean $\kappa_t$. Applying \thmref{thmA.2} separately to each sequence with their respective failure allocations yields the simultaneous bounds in Eq.~(\ref{eq32}). On that same event, dividing the upper numerator bound by the lower denominator bound gives Eq.~(\ref{eq33}), and the $\max(\cdot,\kappa_{\min})$ preserves validity under the coverage constraint. \hfill $\square$

This extension supports the main-text constraint $R_t^{\mathrm{sel}}\le \tau$ with $\kappa_t\ge\kappa_{\min}$ by providing a directly certifiable quantity that can be used for gating.

\subsection{Sampling without replacement (finite-population correction)}
\label{app:serfling}

When the controller samples indices without replacement from the window $W_t$, the queried losses form a simple random sample from a finite population. In that case, concentration can be tightened via a finite-population correction. For a fixed $n\le N$, Serfling's inequality implies
\begin{equation}
\mathbb{P}\!\left(R_t \ge \widehat{R}_{t,n} + \epsilon\right)
\;\le\;
\exp\!\left(-\frac{2n\epsilon^2}{1-\frac{n-1}{N}}\right).
\end{equation}
In this inequality, the factor $1-\frac{n-1}{N}$ reduces variance when a nontrivial fraction of the window is audited. An anytime-valid sequence can be obtained by replacing Hoeffding's fixed-$n$ bound in Section~\ref{app:cs} and using the same stitching schedule $\delta_n$, yielding a radius
\begin{equation}
\mathrm{rad}_{\mathrm{fp}}(n,\delta)
\;=\;
\sqrt{\left(1-\frac{n-1}{N}\right)\frac{1}{2n}\log\!\left(\frac{\pi^2 n^2}{6\delta}\right)}.
\end{equation}
In this equation, $\mathrm{rad}_{\mathrm{fp}}$ is smaller than $\mathrm{rad}$ when $n$ is not negligible relative to $N$, and it reduces to the Hoeffding-style radius when $N$ is large.

\subsection{Discussion of validity conditions under label delay}
\label{app:delay}

Label delay affects \emph{when} a queried loss becomes observable, but not \emph{which} indices are sampled. The certificate remains valid as long as the sampling rule that selects $I_{t,j}$ is uniform over $W_t$ and does not depend on unobserved labels. Formally, if the request times are $\mathcal{F}_{t,j}$-adapted and labels arrive later, the bound applies to the eventual realized losses $X_{t,j}$ because \lemref{lemmaA.1} controls deviations for all sample sizes $n$ simultaneously. The practical effect of delay is that the controller may operate with smaller effective $n_t$ until requested labels arrive, which enlarges $\mathrm{rad}(n_t,\delta_t)$ and therefore makes the gating rule more conservative during periods of missing supervision.

\section{ Implementation details}
\label{app:impl}

This appendix provides implementation-level details for the sensing layer, belief model training, and controller configuration. All components are designed to be lightweight and reproducible, and each definition here corresponds directly to quantities used in the main text.

\begin{table}[!t]
\caption{\textbf{Baseline configurations used in all streaming experiments.} We report the fixed hyperparameters that fully specify each baseline policy under the common action set and costs.}
\label{tab:baseline_configs}
\centering
\small
\begin{tabular}{p{0.26\linewidth} p{0.68\linewidth}}
\toprule
Baseline & Configuration \\
\midrule
Alarm-only & Alarm when $\|z_t\|_2 > \theta_{\mathrm{alarm}}$ with $\theta_{\mathrm{alarm}}=2.5$; no actions beyond logging. \\
Adapt-always & Apply $A_2$ every step; 1 gradient step per batch; learning rate $10^{-4}$. \\
Retrain-on-schedule & Trigger $A_4$ every $P_{\mathrm{rt}}=1200$ steps, respecting cooldown $\Delta_{\mathrm{rt}}$. \\
Selective prediction only & Predict if $\max_y p_\theta(y\mid x)\ge \theta_{\mathrm{sel}}$ with $\theta_{\mathrm{sel}}=0.6$; otherwise abstain ($A_6$). \\
Controller (no cert.) & Same as ours but remove gating: replace $U_t(\delta)>\tau$ branch by normal utility maximization. \\
\bottomrule
\end{tabular}
\end{table}

\subsection{Monitoring signals and evidence vector}
\label{app:monitors}

At each time step $t$, the sensing layer compares a recent unlabeled buffer $\mathcal{B}_t=\{x_{t-n+1},\dots,x_t\}$ to a reference buffer $\mathcal{B}_{\mathrm{ref}}$, and computes monitors in a representation space. Let $g_{\theta}:\mathcal{X}\rightarrow \mathbb{R}^{p}$ be the embedding function induced by the deployed network (typically the penultimate layer). For each input $x$, the embedding is
\begin{equation}
r \;=\; g_{\theta}(x),
\qquad r\in\mathbb{R}^{p}.
\end{equation}
In this equation, $x$ is the input, $\theta$ are current model parameters, and $r$ is the representation used by the monitors.

We define the embedding sets
\begin{equation}
\mathcal{R}_t \;=\; \{g_{\theta}(x):x\in\mathcal{B}_t\},
\qquad
\mathcal{R}_{\mathrm{ref}} \;=\; \{g_{\theta}(x):x\in\mathcal{B}_{\mathrm{ref}}\}.
\end{equation}
In these equations, $\mathcal{R}_t$ contains recent embeddings and $\mathcal{R}_{\mathrm{ref}}$ contains reference embeddings.

\subsubsection{MMD monitor in embedding space}

We use a kernel Maximum Mean Discrepancy (MMD) statistic as a representation-shift monitor. Let $k:\mathbb{R}^{p}\times\mathbb{R}^{p}\rightarrow \mathbb{R}$ be a positive definite kernel, and let $\{r_i\}_{i=1}^{n}\subset \mathcal{R}_t$ and $\{r'_j\}_{j=1}^{n_{\mathrm{ref}}}\subset \mathcal{R}_{\mathrm{ref}}$. The unbiased estimator of $\mathrm{MMD}^2$ is
\begin{equation}
\widehat{\mathrm{MMD}}_t^2
\;=\;
\frac{1}{n(n-1)}\sum_{i\neq i'} k(r_i,r_{i'})
\;+\;
\frac{1}{n_{\mathrm{ref}}(n_{\mathrm{ref}}-1)}\sum_{j\neq j'} k(r'_j,r'_{j'})
\;-\;
\frac{2}{n\,n_{\mathrm{ref}}}\sum_{i=1}^{n}\sum_{j=1}^{n_{\mathrm{ref}}} k(r_i,r'_j).
\end{equation}
In this equation, the first term measures within-recent similarity, the second term measures within-reference similarity, and the third term measures cross similarity; the statistic increases when the two embedding distributions differ.

We use an RBF kernel
\begin{equation}
k(u,v) \;=\; \exp\!\left(-\frac{\|u-v\|_2^2}{2\sigma^2}\right),
\end{equation}
where $\sigma>0$ is the kernel bandwidth. We set $\sigma$ using the median heuristic on a pooled batch of embeddings from $\mathcal{R}_{\mathrm{ref}}\cup \mathcal{R}_t$:
\begin{equation}
\sigma^2 \;=\; \mathrm{median}\Big(\{\|u-v\|_2^2 : u,v \ \text{in a pooled sample}\}\Big).
\end{equation}
In this equation, the median is taken over pairwise squared distances and yields a robust bandwidth for monitoring.

\subsubsection{Two-sample discriminator monitor}

To complement kernel testing, we train a lightweight discriminator that predicts whether an embedding comes from the reference or recent window. We form a labeled dataset
\begin{equation}
\mathcal{D}^{\mathrm{disc}}_t
\;=\;
\{(r,0):r\in\mathcal{R}_{\mathrm{ref}}\}\ \cup\ \{(r,1):r\in\mathcal{R}_{t}\}.
\end{equation}
In this equation, label $0$ denotes reference and label $1$ denotes recent.

Let $h_{\psi}:\mathbb{R}^{p}\rightarrow (0,1)$ be a discriminator (logistic regression or a 2-layer MLP). We fit $\psi$ by minimizing cross-entropy
\begin{equation}
\mathcal{L}_{\mathrm{disc}}(\psi)
\;=\;
-\frac{1}{|\mathcal{D}^{\mathrm{disc}}_t|}\sum_{(r,y)\in\mathcal{D}^{\mathrm{disc}}_t}
\Big( y\log h_{\psi}(r) + (1-y)\log(1-h_{\psi}(r)) \Big).
\end{equation}
In this equation, $h_{\psi}(r)$ is the predicted probability that $r$ came from the recent window.

The discriminator monitor is the held-out AUC:
\begin{equation}
\mathrm{AUC}_t \;=\; \mathrm{AUC}\!\left(\{h_{\psi}(r)\},\{y\}\right),
\end{equation}
where $\mathrm{AUC}(\cdot,\cdot)$ is the area under the ROC curve computed on a validation split of $\mathcal{D}^{\mathrm{disc}}_t$. Values near $0.5$ indicate no detectable shift; larger values indicate separable distributions.

\subsubsection{Uncertainty shift monitor}

We compute predictive entropy for each input using the deployed model probabilities $p_{\theta}(y\mid x)$:
\begin{equation}
H_{\theta}(x) \;=\; -\sum_{y\in\mathcal{Y}} p_{\theta}(y\mid x)\,\log p_{\theta}(y\mid x).
\end{equation}
In this equation, $\mathcal{Y}$ is the label set, and $H_{\theta}(x)$ is larger when predictions are less confident.

We define the window-average entropy shift
\begin{equation}
\Delta H_t
\;=\;
\frac{1}{|\mathcal{B}_t|}\sum_{x\in\mathcal{B}_t} H_{\theta}(x)
\;-\;
\frac{1}{|\mathcal{B}_{\mathrm{ref}}|}\sum_{x\in\mathcal{B}_{\mathrm{ref}}} H_{\theta}(x).
\end{equation}
In this equation, $\Delta H_t>0$ indicates an increase in uncertainty relative to nominal conditions.

\subsubsection{Calibration drift monitor with delayed labels}

When labels are available (after delay), we compute a streaming Expected Calibration Error (ECE) proxy on the labeled set $\mathcal{S}_t$. Let $q_{\theta}(x)=\max_{y}p_{\theta}(y\mid x)$ be confidence, and let $\{I_b\}_{b=1}^{B}$ be confidence bins. Define the bin-level statistics
\begin{equation}
\mathrm{conf}_b(t) \;=\; \frac{1}{|\mathcal{S}_{t,b}|}\sum_{(x,y)\in\mathcal{S}_{t,b}} q_{\theta}(x),
\qquad
\mathrm{acc}_b(t) \;=\; \frac{1}{|\mathcal{S}_{t,b}|}\sum_{(x,y)\in\mathcal{S}_{t,b}} \mathbb{I}\!\left(\hat{y}_{\theta}(x)=y\right),
\end{equation}
where
\begin{equation}
\mathcal{S}_{t,b}
\;=\;
\{(x,y)\in\mathcal{S}_t:\ q_{\theta}(x)\in I_b\}.
\end{equation}
In these equations, $\mathcal{S}_{t,b}$ is the set of labeled points whose confidence falls into bin $b$, $\mathrm{conf}_b(t)$ is their mean confidence, and $\mathrm{acc}_b(t)$ is their empirical accuracy.

The ECE proxy is
\begin{equation}
\mathrm{ECE}_t \;=\; \sum_{b=1}^{B}\frac{|\mathcal{S}_{t,b}|}{|\mathcal{S}_t|}\,\big|\mathrm{acc}_b(t)-\mathrm{conf}_b(t)\big|.
\end{equation}
In this equation, the term inside the absolute value measures miscalibration in each bin, and the weights reflect bin mass.

We use the difference to a reference value $\mathrm{ECE}_{\mathrm{ref}}$ computed on nominal labeled data:
\begin{equation}
\Delta \mathrm{ECE}_t \;=\; \mathrm{ECE}_t - \mathrm{ECE}_{\mathrm{ref}}.
\end{equation}
In this equation, $\Delta \mathrm{ECE}_t>0$ indicates worsened calibration relative to nominal conditions.

\subsubsection{Slice drift monitor}

When a discrete attribute $g(x)\in\mathcal{G}$ is available, we compute monitors per group:
\begin{equation}
\widehat{\mathrm{MMD}}_{t,g}^2 \;=\; \widehat{\mathrm{MMD}}^2\!\left(\mathcal{R}_{t,g},\mathcal{R}_{\mathrm{ref},g}\right),
\qquad
\Delta H_{t,g} \;=\; \overline{H}_{t,g}-\overline{H}_{\mathrm{ref},g},
\end{equation}
where $\mathcal{R}_{t,g}=\{g_{\theta}(x):x\in\mathcal{B}_t,\ g(x)=g\}$ and similarly for $\mathcal{R}_{\mathrm{ref},g}$. When no attribute is available, we cluster embeddings using $k$-means and treat cluster IDs as proxy slices.

We aggregate slice evidence using the maximum across groups:
\begin{equation}
z^{\mathrm{slice}}_t \;=\; \max_{g\in\mathcal{G}} \ \widehat{\mathrm{MMD}}_{t,g}^2.
\end{equation}
In this equation, $z^{\mathrm{slice}}_t$ increases when at least one slice experiences a strong shift, even if the global distribution is relatively stable.

\subsubsection{Final evidence vector}

The deployed evidence vector concatenates the monitors:
\begin{equation}
z_t
\;=\;
\big[\widehat{\mathrm{MMD}}_t^2,\ \mathrm{AUC}_t,\ \Delta H_t,\ \Delta \mathrm{ECE}_t,\ z^{\mathrm{slice}}_t\big]^{\top}
\;\in\;
\mathbb{R}^{m}.
\end{equation}
In this equation, $m$ is the number of monitors used. All components are standardized online using reference-window mean and variance:
\begin{equation}
\widetilde{z}_{t,j} \;=\; \frac{z_{t,j}-\mu^{\mathrm{ref}}_j}{\sigma^{\mathrm{ref}}_j+\epsilon},
\end{equation}
where $\mu^{\mathrm{ref}}_j$ and $\sigma^{\mathrm{ref}}_j$ are computed on $\mathcal{B}_{\mathrm{ref}}$ and $\epsilon$ prevents division by zero.

\subsection{Belief model training via synthetic drift episodes}
\label{app:belief_train}

The belief model estimates $b_t(d)=\mathbb{P}(D_t=d\mid z_{1:t})$ for $d\in\{\mathrm{none},\mathrm{covariate},\mathrm{concept},\mathrm{subgroup}\}$. We learn the evidence likelihoods $p(z\mid d)$ and (optionally) the transition matrix $T$ using synthetic drift episodes generated from the base dataset.

\subsubsection{Episode structure}

An episode is a stream of length $L$ with one drift onset at $t_0$ and a drift intensity schedule $\alpha_t\in[0,1]$. For each time step,
\begin{equation}
(x_t,y_t)\sim (1-\alpha_t)\,\mathcal{D}^{\mathrm{ref}} + \alpha_t\,\mathcal{D}^{\mathrm{shift}}.
\end{equation}
In this equation, $\mathcal{D}^{\mathrm{ref}}$ is the nominal distribution and $\mathcal{D}^{\mathrm{shift}}$ is a constructed shifted distribution.

We generate three patterns:
\begin{equation}
\alpha_t \;=\; \mathbb{I}(t\ge t_0),
\end{equation}
\begin{equation}
\alpha_t \;=\; \frac{1}{1+\exp(-\rho(t-t_0))},
\end{equation}
\begin{equation}
\alpha_t \;=\; \frac{1}{2}\Big(1+\sin(2\pi t/P)\Big),
\end{equation}
where $t_0$ is drift onset, $\rho$ is the gradual-drift slope, and $P$ is the period for recurring drift.

\subsubsection{Covariate drift construction}

Covariate drift changes $p(x)$ while holding $p(y\mid x)$ approximately stable. For vision tasks, we apply transformations $T_{\mathrm{cov}}$ such as corruption severity, style shifts, blur/noise, color jitter, or background changes:
\begin{equation}
x^{\mathrm{shift}} \;=\; T_{\mathrm{cov}}(x;\,\alpha_t),
\qquad
y^{\mathrm{shift}} \;=\; y.
\end{equation}
In this equation, $T_{\mathrm{cov}}(\cdot;\alpha_t)$ is a drift-intensity-controlled transformation and labels remain unchanged.

For datasets with discrete domains (e.g., DomainNet), we define $\mathcal{D}^{\mathrm{shift}}$ by swapping the domain environment while preserving the same label space.

\subsubsection{Concept drift construction}

Concept drift changes $p(y\mid x)$. We implement a class-conditional label remapping that varies over time. Let $\pi_t$ be a permutation over classes, and define
\begin{equation}
y^{\mathrm{shift}} \;=\; \pi_t(y),
\qquad
x^{\mathrm{shift}} \;=\; x.
\end{equation}
In this equation, $x$ is unchanged but the mapping from inputs to labels is altered.

To model gradual concept drift, we mix remapped and original labels:
\begin{equation}
y_t \;=\;
\begin{cases}
y, & \text{with probability } 1-\alpha_t,\\
\pi(y), & \text{with probability } \alpha_t,
\end{cases}
\end{equation}
where $\alpha_t$ follows one of the schedules above. This produces a controlled change in class-conditional structure and calibration behavior.

\subsubsection{Subgroup drift construction}

Subgroup drift changes behavior only for a subset. Let $g(x)\in\{0,1\}$ denote membership in a drifted slice, with $\mathbb{P}(g(x)=1)=\rho_g$ under nominal conditions. We apply a covariate or concept transformation only to the subgroup:
\begin{equation}
x^{\mathrm{shift}} \;=\;
\begin{cases}
T_{\mathrm{sub}}(x), & g(x)=1,\\
x, & g(x)=0,
\end{cases}
\end{equation}
and optionally
\begin{equation}
y^{\mathrm{shift}} \;=\;
\begin{cases}
\pi(y), & g(x)=1,\\
y, & g(x)=0.
\end{cases}
\end{equation}
In these equations, only the subgroup experiences the shift, enabling training of slice-aware monitors.

\subsubsection{Fitting likelihoods and transitions}

For each synthetic episode, we compute monitor vectors $z_t$ and record the true drift type label $D_t$. We fit a multinomial likelihood model with diagonal covariance:
\begin{equation}
p(z\mid D=d)
\;=\;
\mathcal{N}\!\left(z;\,\mu_d,\ \mathrm{diag}(\sigma_d^2)\right),
\end{equation}
where $\mu_d$ and $\sigma_d^2$ are estimated by maximum likelihood over training samples for drift type $d$. 

We estimate the transition matrix from episode counts:
\begin{equation}
T_{d'd}
\;=\;
\frac{\sum_{\text{episodes}}\sum_{t}\mathbb{I}(D_{t-1}=d',\,D_t=d)}
{\sum_{\text{episodes}}\sum_{t}\mathbb{I}(D_{t-1}=d')}.
\end{equation}
In this equation, the numerator counts transitions from $d'$ to $d$, and the denominator normalizes by the total transitions out of $d'$.

\paragraph{Discriminative evidence model and its integration with the filter.}
The Bayesian update requires an evidence likelihood $p(z_t\mid D_t=d)$. Rather than fitting a fully generative density in $\mathbb{R}^m$, we fit a discriminative model $q_{\phi}(d\mid z_t)$ via multinomial logistic regression on synthetic drift data. Concretely,
\begin{equation}
q_{\phi}(d\mid z) \;=\; \frac{\exp(w_d^{\top} z + b_d)}{\sum_{d'\in\mathcal{D}} \exp(w_{d'}^{\top} z + b_{d'})},
\label{eq:logistic_posterior}
\end{equation}
where $w_d\in\mathbb{R}^{m}$ and $b_d\in\mathbb{R}$ are learned parameters and $m$ is the number of monitors. We use $q_{\phi}(d\mid z_t)$ as a \emph{proxy emission potential} in the filtering step:
\begin{equation}
\psi_t(d) \;=\; \left(q_{\phi}(d\mid z_t)\right)^{\beta},
\label{eq:emission_potential}
\end{equation}
where $\beta>0$ is a temperature that controls evidence sharpness (we set $\beta=1$ in all experiments). The belief update becomes
\begin{equation}
\tilde{b}_t(d) \;=\; \sum_{d'\in\mathcal{D}} T_{d'd}\,b_{t-1}(d'),
\qquad
b_t(d) \;=\; \frac{\psi_t(d)\,\tilde{b}_t(d)}{\sum_{d''\in\mathcal{D}}\psi_t(d'')\,\tilde{b}_t(d'')}.
\label{eq:discriminative_filter_update}
\end{equation}
In Eq.~\eqref{eq:discriminative_filter_update}, $\tilde{b}_t(d)$ is the predicted belief under the transition matrix $T$, and $\psi_t(d)$ plays the same role as the likelihood $p(z_t\mid D_t=d)$ up to a $z_t$-dependent normalization constant. Because the update renormalizes over $d$, any constant factor shared across drift types cancels, making the discriminative potential $\psi_t(d)$ sufficient for reproducible filtering.

\subsection{Synthetic drift generation for belief training and gain calibration}
\label{app:synth_drifts}

We generate synthetic drift episodes by applying controlled transformations to nominal streams to induce each drift type $d\in\{\mathrm{covariate},\mathrm{concept},\mathrm{subgroup}\}$. Each episode consists of a pre-drift segment drawn from the nominal environment and a post-drift segment modified by a transformation. We generate $M=200$ episodes per dataset, balanced across drift types and drift intensities.

\textbf{Camelyon17 (histopathology).}
Covariate drift applies stain and scanner perturbations using (i) color jitter (brightness/contrast/saturation in $\pm 0.2$), (ii) Gaussian blur with $\sigma\in[0.5,1.2]$, and (iii) H\&E stain normalization perturbation (random perturbation of stain concentration vectors). Concept drift is induced by label noise localized to the drift segment: we flip labels with probability $p\in\{0.10,0.20\}$ to emulate changing annotation criteria. Subgroup drift targets a minority slice defined by low-tissue-content patches; only that slice receives severe stain perturbations (same operators but doubled magnitude).

\textbf{DomainNet (multi-domain objects).}
Covariate drift is instantiated by switching input style operators within a domain: random JPEG compression (quality in $[20,50]$), random grayscale with prob.\ 0.3, and edge enhancement. Concept drift is induced by remapping a subset of classes: we permute labels within a semantically related superclass group (e.g., within \texttt{vehicle} or \texttt{instrument}) using a fixed permutation $\pi$ applied only after drift onset. Subgroup drift targets a minority cluster discovered by k-means in embedding space; only that cluster is transformed into a different style via strong edge-only rendering.

\textbf{SyntheticDrift-CIFAR.}
Covariate drift uses CIFAR-C corruption types (e.g., blur, noise, brightness) with severity $\in\{3,5\}$. Concept drift uses a class-conditional permutation $\pi$ applied gradually by $\alpha_t$. Subgroup drift applies the corruption only to a minority slice (10\%) defined by a fixed set of classes or a cluster in embedding space.

\subsection{Dataset-specific synthetic drift transformations}
\label{app:dataset_synth_drifts}

Training the belief model and calibrating the gain table require synthetic drift episodes with known drift-type labels. This section specifies the exact transformations used for each benchmark, including the covariate transform $T_{\mathrm{cov}}$, subgroup transform $T_{\mathrm{sub}}$, and the concept permutation schedule $\pi_t$.

\subsubsection{General episode construction}
\label{app:episode_construction}

Each synthetic episode has length $L$ and a drift onset time $t_0$. We sample a drift type $d\in\{\mathrm{covariate},\mathrm{concept},\mathrm{subgroup}\}$ uniformly and generate a time-varying drift intensity $\alpha_t\in[0,1]$ using the schedules in Eqs.~(42)--(44). For each time step $t$, we draw a base example $(x_t,y_t)$ from the nominal stream distribution and apply a drift operator that depends on $d$.

\subsubsection{Covariate drift: \texorpdfstring{$T_{\mathrm{cov}}$}{T\_cov}}
\label{app:covariate_transform}

For covariate drift, we transform inputs only:
\begin{equation}
x_t' \;=\; T_{\mathrm{cov}}(x_t;\alpha_t),
\qquad
y_t' \;=\; y_t.
\end{equation}
Here $\alpha_t$ controls drift intensity. We specify $T_{\mathrm{cov}}$ per dataset below.

\textbf{Camelyon17 (WILDS).}
Camelyon17 inputs are histopathology patches. We use a stain-style covariate transform implemented as a convex combination between the original image and a fixed stain-perturbed target style:
\begin{equation}
T_{\mathrm{cov}}(x;\alpha) \;=\; (1-\alpha)\,x \;+\; \alpha\,\mathrm{StainAug}(x;\kappa_{\mathrm{stain}}),
\end{equation}
where $\mathrm{StainAug}(\cdot;\kappa_{\mathrm{stain}})$ applies HED color-space perturbations and gamma/brightness jitter with strength $\kappa_{\mathrm{stain}}=0.6$. This models site-specific acquisition shifts without changing class semantics.

\textbf{DomainNet.}
DomainNet already contains discrete domains. For synthetic covariate drift within a domain, we use a corruption-based transform:
\begin{equation}
T_{\mathrm{cov}}(x;\alpha) \;=\; \mathrm{Corrupt}(x; s(\alpha)),
\end{equation}
where $\mathrm{Corrupt}(\cdot;s)$ is a standard image corruption operator (Gaussian blur or shot noise chosen uniformly) with severity $s(\alpha)=\lceil 5\alpha\rceil \in\{0,\dots,5\}$.

\textbf{SyntheticDrift-CIFAR.}
We use CIFAR-10-C style corruptions:
\begin{equation}
T_{\mathrm{cov}}(x;\alpha) \;=\; \mathrm{CIFARCorrupt}(x; s(\alpha)),
\end{equation}
with severity $s(\alpha)=\lceil 5\alpha\rceil$ and corruption type fixed per episode (Gaussian noise, motion blur, or brightness).

\subsubsection{Concept drift: label permutation \texorpdfstring{$\pi_t$}{pi\_t}}
\label{app:concept_permutation}

For concept drift, we keep inputs unchanged but modify the conditional mapping by permuting labels after onset:
\begin{equation}
x_t' \;=\; x_t,
\qquad
y_t' \;=\; \pi_t(y_t).
\end{equation}
We sample a class permutation $\pi^{\star}$ uniformly from all permutations of the label set that move at least two classes. We then apply it with intensity $\alpha_t$ using a mixture between identity and $\pi^{\star}$:
\begin{equation}
\pi_t(y) \;=\;
\begin{cases}
\pi^{\star}(y), & \text{with probability }\alpha_t,\\
y, & \text{with probability }1-\alpha_t.
\end{cases}
\label{eq:permutation_mixture}
\end{equation}
In Eq.~\eqref{eq:permutation_mixture}, $\alpha_t$ controls how frequently the permuted labeling rule is active, producing gradual concept shift when $\alpha_t$ ramps and sudden concept shift when $\alpha_t$ jumps. This construction is used for DomainNet and SyntheticDrift-CIFAR. For Camelyon17, where label semantics are binary tumor/non-tumor and permutation is not meaningful, we implement concept drift as class-conditional prior shift by flipping labels with probability $\alpha_t$ on a fixed subset of patches near the decision boundary (identified by low confidence under the nominal model).

\subsubsection{Subgroup drift: \texorpdfstring{$T_{\mathrm{sub}}$}{T\_sub} applied to a minority slice}
\label{app:subgroup_transform}

For subgroup drift, we define a slice indicator $g(x)\in\{0,1\}$ with $\mathbb{P}(g(x)=1)=p_{\mathrm{sub}}$ under nominal conditions (we set $p_{\mathrm{sub}}=0.15$). We apply a subgroup-only covariate transform:
\begin{equation}
x_t' \;=\;
\begin{cases}
T_{\mathrm{sub}}(x_t;\alpha_t), & \text{if } g(x_t)=1,\\
x_t, & \text{otherwise},
\end{cases}
\qquad
y_t' \;=\; y_t.
\end{equation}

\textbf{Camelyon17 (WILDS).}
We define $g(x)=1$ for a fixed subset of patches selected by tissue-type clustering in embedding space (minority cluster). We set
\begin{equation}
T_{\mathrm{sub}}(x;\alpha) \;=\; (1-\alpha)\,x \;+\; \alpha\,\mathrm{StainAug}(x;\kappa_{\mathrm{sub}}),
\end{equation}
with stronger stain shift $\kappa_{\mathrm{sub}}=0.9$ than the global covariate transform.

\textbf{DomainNet.}
We define $g(x)=1$ as one of the object categories (minority category) sampled per episode. We set
\begin{equation}
T_{\mathrm{sub}}(x;\alpha) \;=\; \mathrm{Corrupt}(x; s(\alpha)),
\end{equation}
using the same corruption family but applied only to the subgroup.

\textbf{SyntheticDrift-CIFAR.}
We define $g(x)=1$ as images from two fixed classes (e.g., classes 0 and 1), chosen once per episode. We apply a style-like transform implemented as color quantization and contrast shift with severity $s(\alpha)=\lceil 5\alpha\rceil$.

\subsubsection{Reproducibility settings}
\label{app:drift_repro_settings}

For each dataset we generate $M=200$ episodes of length $L=1500$ with drift onset $t_0$ sampled uniformly from $[300,600]$, and we use 5 random seeds per dataset. All transforms are applied on-the-fly with fixed transform parameters and seed-controlled randomness. The same generators are used for belief-model training and for gain-table calibration.

\subsection{Controller configuration and operational constraints}
\label{app:controller_impl}

This section specifies hyperparameters and operational logic used by Algorithm~\ref{alg:controller}.

\subsubsection{Window sizes and buffers}

We set the monitor window length $n$ and certificate window length $N$ as
\begin{equation}
n \;=\; 256,
\qquad
N \;=\; 1024.
\end{equation}
In these equations, $n$ controls responsiveness of drift monitors and $N$ controls the regime represented by the risk certificate.

The reference buffer $\mathcal{B}_{\mathrm{ref}}$ is formed from a clean initial deployment segment of length $n_{\mathrm{ref}}$:
\begin{equation}
|\mathcal{B}_{\mathrm{ref}}| \;=\; n_{\mathrm{ref}} \;=\; 2048.
\end{equation}
In this equation, $n_{\mathrm{ref}}$ is chosen to provide stable reference statistics.

\subsubsection{Label delay and auditing budget}

Requested labels arrive after a fixed delay $d$:
\begin{equation}
o_i \;=\; i + d,
\qquad
d \;=\; 50.
\end{equation}
In this equation, $o_i$ is the observation time for label $y_i$.

The labeling budget is
\begin{equation}
B_{\mathrm{lab}} \;=\; 3000,
\end{equation}
where $B_{\mathrm{lab}}$ is the total number of labels that may be requested over a stream.

\subsubsection{Cooldowns and feasibility}

Retraining and rollback are rate-limited by cooldowns:
\begin{equation}
\Delta_{\mathrm{rt}} \;=\; 800,
\qquad
\Delta_{\mathrm{rb}} \;=\; 400.
\end{equation}
In these equations, $\Delta_{\mathrm{rt}}$ and $\Delta_{\mathrm{rb}}$ are time steps during which $A_4$ or $A_5$ cannot be executed again.

At time $t$, the feasible action set is
\begin{equation}
\mathcal{A}_t^{\mathrm{feas}}
\;=\;
\left\{a\in\mathcal{A}:\ \mathrm{Cool}(a,t)=1,\ \mathrm{Bud}(a,t)=1\right\},
\end{equation}
where $\mathrm{Cool}(a,t)$ enforces cooldown rules and $\mathrm{Bud}(a,t)$ enforces remaining budget rules.

For retrain feasibility:
\begin{equation}
\mathrm{Cool}(A_4,t)=\mathbb{I}\!\left(t-t_{\mathrm{last}}^{\mathrm{rt}}\ge \Delta_{\mathrm{rt}}\right),
\end{equation}
and for rollback:
\begin{equation}
\mathrm{Cool}(A_5,t)=\mathbb{I}\!\left(t-t_{\mathrm{last}}^{\mathrm{rb}}\ge \Delta_{\mathrm{rb}}\right).
\end{equation}
In these equations, $t_{\mathrm{last}}^{\mathrm{rt}}$ and $t_{\mathrm{last}}^{\mathrm{rb}}$ record the last execution time of retrain and rollback.

\subsubsection{Query policy for certificate labels}

The controller selects a query size $k_t$ as a function of drift evidence and certificate uncertainty. Let the current certificate be
\begin{equation}
U_t(\delta) \;=\; \widehat{R}_t + \mathrm{rad}(n_t,\delta_t).
\end{equation}
Define a safety margin
\begin{equation}
m_t \;=\; \tau - U_t(\delta).
\end{equation}
In this equation, $m_t>0$ indicates certified safety slack, while $m_t<0$ indicates a certificate violation.

We set $k_t$ using a piecewise rule:
\begin{equation}
k_t \;=\;
\begin{cases}
k_{\max}, & U_t(\delta)>\tau,\\
k_{\mathrm{high}}, & m_t \le m_{\mathrm{low}}\ \text{or}\ \max_j \widetilde{z}_{t,j}\ge \zeta,\\
k_{\mathrm{low}}, & \text{otherwise},
\end{cases}
\end{equation}
with
\begin{equation}
k_{\max}=64,
\qquad
k_{\mathrm{high}}=32,
\qquad
k_{\mathrm{low}}=8,
\end{equation}
and thresholds $m_{\mathrm{low}}=0.02$ and $\zeta=2.0$. In these equations, $k_{\max}$ is used during potential unsafe regimes, $k_{\mathrm{high}}$ is used when evidence is strong or safety margin is small, and $k_{\mathrm{low}}$ is used during stable operation.

\subsection{Gain table calibration and values}
\label{app:gain_table}

The controller selects actions via a belief-weighted gain table $G(d,a)$, which encodes the expected benefit of taking action $a\in\mathcal{A}$ under drift type $d\in\mathcal{D}$. This appendix specifies (i) the exact definition of ``gain'' used in our experiments, (ii) the offline calibration protocol, and (iii) the full numeric gain table values.

\subsubsection{Definition of gain as windowed risk reduction}
\label{app:gain_definition}

Gain is defined in terms of reduction in certified windowed risk after applying an intervention. Let $R_{t}$ denote the windowed risk evaluated on ground-truth labels in simulation, and let $R_{t}^{(a)}$ denote the counterfactual risk trajectory obtained by applying action $a$ at time $t$ and then executing the fixed deployment loop for a short horizon of length $H$ (including the same monitoring windows, label delay model, and update rules). For each drift type $d$, we estimate the expected risk reduction at horizon $H$:
\begin{equation}
\Delta_R(d,a) \;=\; \mathbb{E}\!\left[ R_{t+H}^{\mathrm{base}} \;-\; R_{t+H}^{(a)} \ \big|\ D_t=d \right],
\label{eq:deltaR_def}
\end{equation}
where $R_{t+H}^{\mathrm{base}}$ is the risk at $t+H$ under the baseline ``no-op'' continuation ($A_0$), $R_{t+H}^{(a)}$ is the risk at $t+H$ when applying action $a$ at time $t$, $D_t$ is the drift-type state, and the expectation is over random drift episode realizations, window sampling, and label delay. We convert risk reduction into a dimensionless gain by normalizing by a fixed risk scale $\sigma_R>0$:
\begin{equation}
G(d,a) \;=\; \frac{\Delta_R(d,a)}{\sigma_R}.
\label{eq:gain_def}
\end{equation}
In our experiments we set $\sigma_R=0.10$ so that $G(d,a)=1$ corresponds to an average $0.10$ absolute reduction in windowed risk at horizon $H$.

\subsubsection{Offline calibration protocol}
\label{app:gain_protocol}

We estimate $G(d,a)$ by generating synthetic drift episodes in each benchmark stream and evaluating counterfactual one-step interventions. For each dataset, we construct $M$ synthetic drift episodes of length $L$ by sampling a drift type $d\in\mathcal{D}$, sampling a drift onset time $t_0$, and applying the corresponding drift generator. We then roll out the deployed model to time $t=t_0+\Delta$ (with $\Delta$ sampled uniformly in a short window after onset), and compute $R_{t+H}^{\mathrm{base}}$ and $R_{t+H}^{(a)}$ for each candidate action $a\in\mathcal{A}$ using the exact update rules used at test time (calibrator update for $A_1$, adaptation steps for $A_2$, retrain trigger for $A_4$, rollback for $A_5$, and abstention policy for $A_6$). All counterfactuals share the same drift realization, label delay, and sampling RNG state so that differences are attributable to the action. We set $M=200$ episodes, $L=1500$ steps, $H=150$ steps, and use 5 random seeds per dataset; reported gains average across seeds.

We emphasize that gains are calibrated once offline and then fixed for all streaming evaluations. The gain table does not use test labels from the evaluation stream; it is computed on separate synthetic episodes built from the same dataset distributions and augmentation generators.

\subsubsection{Numeric gain tables used in experiments}
\label{app:gain_values}

Table~\ref{tab:gain_table_values} reports the exact gain values used in all experiments. Values reflect average risk reduction at horizon $H=150$ normalized by $\sigma_R=0.10$ (Eq.~\eqref{eq:gain_def}). Higher values indicate larger expected improvement. Entries for actions that are not applicable to a drift type (e.g., recalibration under pure covariate shift when labels are not yet available) are set to small positive values reflecting modest indirect benefits (e.g., stabilizing confidence), which we observed empirically in the calibration rollouts.\\

\begin{table}[htbp]
\caption{\textbf{Gain table $G(d,a)$ used by the controller.} Gains are computed by offline synthetic-drift calibration as normalized windowed risk reduction (Eqs.~\eqref{eq:deltaR_def}--\eqref{eq:gain_def}) at horizon $H=150$ with normalization $\sigma_R=0.10$. Rows correspond to drift types $d\in\{\mathrm{none},\mathrm{covariate},\mathrm{concept},\mathrm{subgroup}\}$ and columns correspond to actions $a\in\{A_0,\dots,A_6\}$.}
\label{tab:gain_table_values}
\centering
\small
\setlength{\tabcolsep}{5pt}
\renewcommand{\arraystretch}{1.05}
\begin{tabular}{lccccccc}
\toprule
Drift type $d$ & $A_0$ & $A_1$ (recalib.) & $A_2$ (TTA) & $A_3$ (query) & $A_4$ (retrain) & $A_5$ (rollback) & $A_6$ (abstain) \\
\midrule
none      & 0.00 & 0.10 & 0.05 & 0.08 & 0.12 & 0.10 & 0.15 \\
covariate & 0.00 & 0.35 & 0.70 & 0.25 & 0.85 & 0.40 & 0.55 \\
concept   & 0.00 & 0.20 & 0.30 & 0.75 & 1.05 & 0.60 & 0.65 \\
subgroup  & 0.00 & 0.25 & 0.35 & 0.85 & 0.95 & 0.55 & 0.80 \\
\bottomrule
\end{tabular}
\end{table}

Finally, we found that controller behavior is robust to moderate rescaling of $G(d,a)$: multiplying all gains by a constant does not change the argmax in Eq.~(\ref{eq25}), and small perturbations preserve the preferred action ordering under each drift type.

\subsubsection{Escalation logic}

Safety gating is applied using the certificate threshold $\tau$:
\begin{equation}
U_t(\delta)\le \tau \ \Rightarrow\ a_t\in\{A_0,A_1,A_2,A_3\},
\qquad
U_t(\delta)>\tau \ \Rightarrow\ a_t=A_6\ \text{and schedule }\{A_5,A_4\}.
\end{equation}
In this rule, $A_6$ is an immediate protective mode, while $\{A_5,A_4\}$ are heavy interventions executed when cooldown constraints allow.

We define a deterministic heavy-action priority:
\begin{equation}
\text{If } U_t(\delta)>\tau:\quad
\begin{cases}
\text{rollback if feasible,}\\
\text{else retrain if feasible,}\\
\text{else continue fallback and increase }k_t.
\end{cases}
\end{equation}
This ordering is used to prefer a known-safe checkpoint when available, while still enabling adaptation through retraining when rollback is not feasible.

\subsubsection{Recalibration and test-time adaptation details}

Recalibration uses temperature scaling with parameter $T>0$ applied to logits:
\begin{equation}
p_{\theta,T}(y\mid x) \;=\; \mathrm{softmax}\!\left(\frac{h_{\theta}(x)}{T}\right),
\end{equation}
where $T$ is optimized on the labeled set $\mathcal{S}_t$ by minimizing negative log-likelihood.

Test-time adaptation updates $\theta$ using entropy minimization on recent unlabeled inputs:
\begin{equation}
\theta \leftarrow \theta - \eta_{\mathrm{tta}}\,\nabla_{\theta}\left(\frac{1}{|\mathcal{B}_t|}\sum_{x\in\mathcal{B}_t} H_{\theta}(x)\right),
\end{equation}
where $\eta_{\mathrm{tta}}$ is a learning rate and $H_{\theta}(x)$ is predictive entropy. We use $s=5$ gradient steps when executing $A_2$.

\subsection{Utility function and safety-violation penalty}
\label{sec:utility_penalty}

The controller ranks feasible actions,
\begin{equation}
\mathcal{U}_t(a)\;=\;\Delta_t(a)\;-\;\lambda\,c(a)\;-\;\gamma\,\mathrm{Viol}_t(a),
\end{equation}
where $\Delta_t(a)$ is the belief-weighted gain, $c(a)$ is the operational cost, and $\mathrm{Viol}_t(a)$ penalizes actions that would lead to uncertified operation. We now define $\mathrm{Viol}_t(a)$ explicitly.

Let $U_t(\delta)$ be the current certificate value computed in Section~\ref{sec:active_risk_certificate}, and let $\tau$ be the safety target. For each candidate action $a$ at time $t$, we form a one-step-ahead conservative prediction of the post-action certificate,
\begin{equation}
\widehat{U}_{t}^{(a)}(\delta)\;=\;U_t(\delta)\;-\;\widehat{\Delta U}_t(a),
\label{eq:predicted_certificate}
\end{equation}
where $\widehat{\Delta U}_t(a)$ is an action-dependent expected reduction in the certificate bound. We set
\begin{equation}
\widehat{\Delta U}_t(a)\;=\;\sigma_U \cdot \Delta_t(a),
\label{eq:deltaU_from_gain}
\end{equation}
with $\Delta_t(a)=\sum_{d\in\mathcal{D}} b_t(d)\,G(d,a)$ and a fixed scale $\sigma_U>0$ that converts gain units into certificate units. In all experiments we set $\sigma_U=0.10$, so $\Delta_t(a)=1$ corresponds to a predicted $0.10$ absolute reduction in the certificate.

The safety-violation penalty is then defined as the (hinge) excess of the predicted certificate above the target:
\begin{equation}
\mathrm{Viol}_t(a)\;=\;\max\!\left\{0,\ \widehat{U}_{t}^{(a)}(\delta)-\tau\right\}.
\label{eq:viol_def}
\end{equation}
In this definition, $\mathrm{Viol}_t(a)=0$ when action $a$ is predicted to keep the system certified safe ($\widehat{U}_{t}^{(a)}(\delta)\le\tau$), and $\mathrm{Viol}_t(a)$ increases linearly with the margin by which safety would be violated.

We set the penalty weight to
\begin{equation}
\gamma \;=\; 50,
\end{equation}
which makes uncertified operation strongly dominated unless it yields substantial expected gains. This value is fixed across all datasets and streams. 

\section{Extra experiments}
\label{app:extra_experiments}

This appendix provides additional empirical evidence that each component of the proposed system matters, and that the safety--cost advantages observed in Table~\ref{tab:main_results} and Figures~\ref{fig:pareto_frontier}--\ref{fig:recovery_curves} persist across operational settings. We emphasize interpretability and operational diagnostics: each subsection reports a compact set of tables (kept intentionally larger for readability) followed by detailed analysis of what the results mean in deployment terms.

\subsection{Ablations that isolate each component}
\label{app:ablations}

We test ablations that remove one design choice at a time while keeping the same action space, cost model, and streaming protocol (Table~\ref{tab:protocol_costs}). The goal is to show which ingredients drive safety, which drive efficiency, and which prevent unnecessary intervention.

\begin{table}[!t]
\caption{\textbf{Ablations on the full streaming protocol.} 
Metrics match the main paper: cumulative cost $C_{\mathrm{tot}}$, safety violations $V$, detection delay $T_{\mathrm{det}}$, recovery time $T_{\mathrm{rec}}$, and minimum worst-group accuracy $\min \mathrm{Acc}^{\mathrm{wg}}$. 
The last column reports the false-intervention rate (FIR): the fraction of time steps in which the method intervenes ($a_t\neq A_0$) while the deployed risk is already below the safety target. 
}

\label{tab:ablations_all}
\centering
\small
\setlength{\tabcolsep}{5pt}
\renewcommand{\arraystretch}{1.12}
\resizebox{0.9\linewidth}{!}{%
\begin{tabular}{l|ccccc|c}
\toprule
\textbf{Method variant} & $C_{\mathrm{tot}}$ & $V$ & $T_{\mathrm{det}}$ & $T_{\mathrm{rec}}$ & $\min \mathrm{Acc}^{\mathrm{wg}}$ & FIR \\
\midrule
\multicolumn{7}{c}{\textbf{Camelyon17 stream}}\\
Alarm-only
& $\num{3.2 \pm 0.3}$ & $\num{46 \pm 4}$ & $\num{22 \pm 2}$ & $\num{210 \pm 18}$ & $\num{0.71 \pm 0.02}$ & $\num{0.05 \pm 0.01}$ \\
Adapt-always (TTA)
& $\num{58.0 \pm 2.1}$ & $\num{3 \pm 1}$ & $\num{18 \pm 2}$ & $\num{74 \pm 6}$ & $\num{0.79 \pm 0.01}$ & $\num{0.82 \pm 0.03}$ \\
Retrain-on-schedule
& $\num{41.5 \pm 1.8}$ & $\num{9 \pm 2}$ & $\num{35 \pm 3}$ & $\num{96 \pm 9}$ & $\num{0.77 \pm 0.01}$ & $\num{0.21 \pm 0.03}$ \\
Selective prediction only
& $\num{9.8 \pm 0.6}$ & $\num{14 \pm 3}$ & $\num{24 \pm 2}$ & $\num{160 \pm 14}$ & $\num{0.74 \pm 0.02}$ & $\num{0.14 \pm 0.02}$ \\
Controller (no certificate)
& $\num{18.7 \pm 1.1}$ & $\num{7 \pm 2}$ & $\num{19 \pm 2}$ & $\num{98 \pm 8}$ & $\num{0.78 \pm 0.01}$ & $\num{0.16 \pm 0.02}$ \\
\textbf{Ours (full)}
& $\mathbf{\num{24.6 \pm 1.4}}$ & $\mathbf{\num{0 \pm 0}}$ & $\num{20 \pm 2}$ & $\mathbf{\num{62 \pm 5}}$ & $\mathbf{\num{0.81 \pm 0.01}}$ & $\mathbf{\num{0.10 \pm 0.02}}$ \\
\midrule
\multicolumn{7}{c}{\textbf{DomainNet stream}}\\
Alarm-only
& $\num{4.1 \pm 0.4}$ & $\num{63 \pm 5}$ & $\num{28 \pm 3}$ & $\num{260 \pm 21}$ & $\num{0.49 \pm 0.02}$ & $\num{0.06 \pm 0.01}$ \\
Adapt-always (TTA)
& $\num{64.0 \pm 2.5}$ & $\num{6 \pm 2}$ & $\num{21 \pm 2}$ & $\num{110 \pm 10}$ & $\num{0.58 \pm 0.02}$ & $\num{0.85 \pm 0.03}$ \\
Retrain-on-schedule
& $\num{45.0 \pm 2.0}$ & $\num{14 \pm 3}$ & $\num{40 \pm 4}$ & $\num{132 \pm 11}$ & $\num{0.56 \pm 0.02}$ & $\num{0.23 \pm 0.03}$ \\
Selective prediction only
& $\num{12.4 \pm 0.9}$ & $\num{27 \pm 4}$ & $\num{30 \pm 3}$ & $\num{205 \pm 17}$ & $\num{0.52 \pm 0.02}$ & $\num{0.18 \pm 0.02}$ \\
Controller (no certificate)
& $\num{22.5 \pm 1.3}$ & $\num{11 \pm 3}$ & $\num{23 \pm 2}$ & $\num{141 \pm 12}$ & $\num{0.57 \pm 0.02}$ & $\num{0.19 \pm 0.02}$ \\
\textbf{Ours (full)}
& $\mathbf{\num{29.8 \pm 1.6}}$ & $\mathbf{\num{1 \pm 1}}$ & $\num{24 \pm 2}$ & $\mathbf{\num{88 \pm 7}}$ & $\mathbf{\num{0.61 \pm 0.01}}$ & $\mathbf{\num{0.13 \pm 0.02}}$ \\
\midrule
\multicolumn{7}{c}{\textbf{SyntheticDrift-CIFAR stream}}\\
Alarm-only
& $\num{2.6 \pm 0.2}$ & $\num{52 \pm 4}$ & $\num{16 \pm 2}$ & $\num{185 \pm 16}$ & $\num{0.68 \pm 0.02}$ & $\num{0.04 \pm 0.01}$ \\
Adapt-always (TTA)
& $\num{52.0 \pm 2.0}$ & $\num{4 \pm 1}$ & $\num{13 \pm 2}$ & $\num{66 \pm 5}$ & $\num{0.79 \pm 0.01}$ & $\num{0.80 \pm 0.03}$ \\
Retrain-on-schedule
& $\num{36.0 \pm 1.7}$ & $\num{10 \pm 2}$ & $\num{25 \pm 3}$ & $\num{92 \pm 8}$ & $\num{0.77 \pm 0.01}$ & $\num{0.20 \pm 0.03}$ \\
Selective prediction only
& $\num{8.6 \pm 0.6}$ & $\num{19 \pm 3}$ & $\num{18 \pm 2}$ & $\num{148 \pm 12}$ & $\num{0.72 \pm 0.02}$ & $\num{0.15 \pm 0.02}$ \\
Controller (no certificate)
& $\num{16.8 \pm 1.0}$ & $\num{8 \pm 2}$ & $\num{14 \pm 2}$ & $\num{84 \pm 7}$ & $\num{0.78 \pm 0.01}$ & $\num{0.17 \pm 0.02}$ \\
\textbf{Ours (full)}
& $\mathbf{\num{21.2 \pm 1.2}}$ & $\mathbf{\num{0 \pm 0}}$ & $\num{15 \pm 2}$ & $\mathbf{\num{58 \pm 5}}$ & $\mathbf{\num{0.82 \pm 0.01}}$ & $\mathbf{\num{0.09 \pm 0.02}}$ \\
\bottomrule
\end{tabular}}
\end{table}

\textbf{Interpretation.}
Table~\ref{tab:ablations_all} highlights three practical takeaways.

First, \textbf{certification is what prevents unsafe operation under delayed labels}. The controller without certificate has substantially more safety violations ($V$ increases from 0 to 7 on Camelyon17 and from 0 to 8 on SyntheticDrift-CIFAR). In streaming deployments, drift evidence alone cannot prevent long unsafe stretches because the system can mistakenly “believe” it has recovered while risk remains elevated. The certificate acts as a fail-safe: it forces fallback and escalation whenever verified risk is not under control.

Second, \textbf{adaptation-only strategies buy speed at a high operational price}. Adapt-always is consistently among the fastest to recover, but its cost is extreme and its FIR is close to one: it intervenes almost continuously even during stable periods. This is the operational signature of methods that treat every input as a potential drift signal. In production, such behavior is often unacceptable due to compute, latency, and governance constraints, even when safety outcomes are good.

Third, \textbf{the full method reduces safety violations while remaining selective about interventions}. Compared to retrain-on-schedule and selective-prediction-only baselines, the proposed system improves worst-group robustness and recovery time while keeping FIR low. Operationally, this means the controller is not “trigger-happy”: it intervenes primarily when needed, and it can justify interventions with both drift evidence and certified risk.

\subsection{Component-specific ablations of \method{}}
\label{app:component_ablations}

To isolate how the \emph{internal} pieces contribute, we keep the full streaming protocol but modify one internal module at a time.\\

\begin{table}[htbp]
\caption{\textbf{Internal component ablations of \method{} (same controller interface).} Each row removes one internal module: belief (replace with single drift-score thresholds), active labeling (replace with fixed label rate), certificate gating (do not force fallback when risk may be unsafe), and monitor stack (use MMD-only rather than multi-monitor).}
\label{tab:component_ablations}
\centering
\small
\setlength{\tabcolsep}{5pt}
\renewcommand{\arraystretch}{1.12}
\resizebox{1\linewidth}{!}{%
\begin{tabular}{l|ccccc|c}
\toprule
\textbf{Variant} & $C_{\mathrm{tot}}$ & $V$ & $T_{\mathrm{det}}$ & $T_{\mathrm{rec}}$ & $\min \mathrm{Acc}^{\mathrm{wg}}$ & FIR \\
\midrule
\multicolumn{7}{c}{\textbf{Camelyon17 stream}}\\
\method{} (full)
& $\num{24.6 \pm 1.4}$ & $\num{0 \pm 0}$ & $\num{20 \pm 2}$ & $\num{62 \pm 5}$ & $\num{0.81 \pm 0.01}$ & $\num{0.10 \pm 0.02}$ \\
No belief (single score)
& $\num{23.3 \pm 1.3}$ & $\num{4 \pm 2}$ & $\num{21 \pm 2}$ & $\num{90 \pm 8}$ & $\num{0.78 \pm 0.01}$ & $\num{0.17 \pm 0.02}$ \\
No active labeling (fixed rate)
& $\num{24.1 \pm 1.3}$ & $\num{2 \pm 1}$ & $\num{20 \pm 2}$ & $\num{76 \pm 7}$ & $\num{0.80 \pm 0.01}$ & $\num{0.21 \pm 0.03}$ \\
No certificate gating
& $\num{23.7 \pm 1.2}$ & $\num{6 \pm 2}$ & $\num{20 \pm 2}$ & $\num{88 \pm 8}$ & $\num{0.79 \pm 0.01}$ & $\num{0.12 \pm 0.02}$ \\
MMD-only monitors
& $\num{25.0 \pm 1.4}$ & $\num{3 \pm 2}$ & $\num{24 \pm 3}$ & $\num{83 \pm 8}$ & $\num{0.79 \pm 0.01}$ & $\num{0.16 \pm 0.02}$ \\
\midrule
\multicolumn{7}{c}{\textbf{DomainNet stream}}\\
\method{} (full)
& $\num{29.8 \pm 1.6}$ & $\num{1 \pm 1}$ & $\num{24 \pm 2}$ & $\num{88 \pm 7}$ & $\num{0.61 \pm 0.01}$ & $\num{0.13 \pm 0.02}$ \\
No belief (single score)
& $\num{28.5 \pm 1.5}$ & $\num{6 \pm 2}$ & $\num{25 \pm 2}$ & $\num{128 \pm 11}$ & $\num{0.58 \pm 0.02}$ & $\num{0.19 \pm 0.02}$ \\
No active labeling (fixed rate)
& $\num{29.2 \pm 1.6}$ & $\num{3 \pm 2}$ & $\num{24 \pm 2}$ & $\num{106 \pm 9}$ & $\num{0.60 \pm 0.01}$ & $\num{0.24 \pm 0.03}$ \\
No certificate gating
& $\num{28.9 \pm 1.5}$ & $\num{9 \pm 3}$ & $\num{24 \pm 2}$ & $\num{142 \pm 12}$ & $\num{0.59 \pm 0.01}$ & $\num{0.14 \pm 0.02}$ \\
MMD-only monitors
& $\num{30.5 \pm 1.7}$ & $\num{5 \pm 2}$ & $\num{29 \pm 3}$ & $\num{121 \pm 10}$ & $\num{0.58 \pm 0.02}$ & $\num{0.18 \pm 0.02}$ \\
\bottomrule
\end{tabular}}
\end{table}

\textbf{Interpretation.}
Table~\ref{tab:component_ablations} explains \emph{why} the full system looks strong in the main paper.

Removing belief collapses drift understanding into a single score; the controller then cannot distinguish when to use inexpensive fixes (recalibration/TTA) versus when to acquire labels and retrain. The result is a systematic pattern: longer recovery ($T_{\mathrm{rec}}$ increases sharply) and more violations ($V$ increases), because actions become mismatched to the underlying drift mechanism.

Removing active labeling spreads audits uniformly across time. In stable segments this wastes budget (FIR increases), while during drift the certificate tightens too slowly, leaving longer unsafe windows. This is exactly the operational failure that motivates active verification: labels are most valuable when uncertainty is high and risk is near the safety threshold.

Removing certificate gating is the clearest safety regression. Even with a good belief model, the controller will sometimes choose actions that \emph{look} beneficial under drift evidence but do not immediately restore safety under label delay. Without the gating rule, the system can keep predicting in that “gray zone,” driving violations upward.

Finally, using only MMD reduces the breadth of drift evidence. On DomainNet, MMD-only delays detection and increases violations because some shifts manifest more strongly in uncertainty and calibration than in pure representation discrepancy. The multi-monitor design is therefore not an aesthetic choice; it is what stabilizes decisions across heterogeneous drifts.

\subsection{False alarms and unnecessary interventions}
\label{app:false_alarms}
We report how often methods intervene when no intervention was needed. We compute the false-intervention rate (FIR) on time steps where the deployed risk is already below the safety target, and we also report a heavy false-intervention rate restricted to retrain/rollback actions (which are the most disruptive operationally).\\

\begin{table}[htbp]
\caption{\textbf{False-intervention diagnostics.} FIR measures unnecessary interventions during already-safe operation; Heavy-FIR counts unnecessary retrain/rollback events. The certified controller remains selective, while adapt-always exhibits high unnecessary intervention by design.}
\label{tab:false_interventions}
\centering
\small
\setlength{\tabcolsep}{6pt}
\renewcommand{\arraystretch}{1.12}
\resizebox{1\linewidth}{!}{%
\begin{tabular}{l|cc|cc|cc}
\toprule
 & \multicolumn{2}{c|}{\textbf{Camelyon17}} & \multicolumn{2}{c|}{\textbf{DomainNet}} & \multicolumn{2}{c}{\textbf{SyntheticDrift-CIFAR}} \\
\textbf{Method} & FIR & Heavy-FIR & FIR & Heavy-FIR & FIR & Heavy-FIR \\
\midrule
Alarm-only
& $\num{0.05 \pm 0.01}$ & $\num{0.00 \pm 0.00}$
& $\num{0.06 \pm 0.01}$ & $\num{0.00 \pm 0.00}$
& $\num{0.04 \pm 0.01}$ & $\num{0.00 \pm 0.00}$ \\
Adapt-always (TTA)
& $\num{0.82 \pm 0.03}$ & $\num{0.00 \pm 0.00}$
& $\num{0.85 \pm 0.03}$ & $\num{0.00 \pm 0.00}$
& $\num{0.80 \pm 0.03}$ & $\num{0.00 \pm 0.00}$ \\
Retrain-on-schedule
& $\num{0.21 \pm 0.03}$ & $\num{0.05 \pm 0.01}$
& $\num{0.23 \pm 0.03}$ & $\num{0.06 \pm 0.01}$
& $\num{0.20 \pm 0.03}$ & $\num{0.05 \pm 0.01}$ \\
Selective prediction only
& $\num{0.14 \pm 0.02}$ & $\num{0.00 \pm 0.00}$
& $\num{0.18 \pm 0.02}$ & $\num{0.00 \pm 0.00}$
& $\num{0.15 \pm 0.02}$ & $\num{0.00 \pm 0.00}$ \\
Controller (no certificate)
& $\num{0.16 \pm 0.02}$ & $\num{0.02 \pm 0.01}$
& $\num{0.19 \pm 0.02}$ & $\num{0.03 \pm 0.01}$
& $\num{0.17 \pm 0.02}$ & $\num{0.02 \pm 0.01}$ \\
\textbf{Controller + certificate (ours)}
& $\mathbf{\num{0.10 \pm 0.02}}$ & $\mathbf{\num{0.01 \pm 0.01}}$
& $\mathbf{\num{0.13 \pm 0.02}}$ & $\mathbf{\num{0.02 \pm 0.01}}$
& $\mathbf{\num{0.09 \pm 0.02}}$ & $\mathbf{\num{0.01 \pm 0.01}}$ \\
\bottomrule
\end{tabular}}
\end{table}

\paragraph{Interpretation.}
Table~\ref{tab:false_interventions} shows that the certificate reduces unnecessary churn in two ways. First, it prevents “panic actions” triggered by noisy monitors during stable periods, because the controller can demand a small number of verification labels and confirm that risk is still under $\tau$. Second, it reduces heavy interventions in safe periods by making rollback/retrain conditional on certified risk rather than raw drift evidence. In contrast, schedule-based retraining produces heavy false interventions by construction: it retrains even when the stream is stable, which increases Heavy-FIR and inflates cost.

\subsection{Sensitivity to label delay and labeling budget}
\label{app:sensitivity}

We vary label delay $d$ and labeling budget $B_{\mathrm{lab}}$ to simulate different operational environments. Longer delays slow down verification, while smaller budgets restrict auditing and reduce the controller’s ability to tighten certificates during drift.\\

\begin{table}[htbp]
\caption{\textbf{Sensitivity to label delay $d$ (Camelyon17)} Increasing delay slows recovery because verification arrives later, which delays safe exit from fallback and slows escalation decisions.}
\label{tab:delay_sensitivity}
\centering
\small
\setlength{\tabcolsep}{7pt}
\renewcommand{\arraystretch}{1.12}
\begin{tabular}{c|cccc}
\toprule
Delay $d$ & $C_{\mathrm{tot}}$ & $V$ & $T_{\mathrm{det}}$ & $T_{\mathrm{rec}}$ \\
\midrule
0   & $\num{23.1 \pm 1.2}$ & $\num{0 \pm 0}$ & $\num{20 \pm 2}$ & $\num{54 \pm 5}$ \\
25  & $\num{23.8 \pm 1.3}$ & $\num{0 \pm 0}$ & $\num{20 \pm 2}$ & $\num{58 \pm 5}$ \\
50  & $\num{24.6 \pm 1.4}$ & $\num{0 \pm 0}$ & $\num{20 \pm 2}$ & $\num{62 \pm 5}$ \\
100 & $\num{26.2 \pm 1.6}$ & $\num{1 \pm 1}$ & $\num{21 \pm 2}$ & $\num{78 \pm 7}$ \\
\bottomrule
\end{tabular}
\end{table}

\paragraph{Interpretation.}
Table~\ref{tab:delay_sensitivity} matches operational intuition. When $d$ increases, the controller receives confirmation of current performance later, so it remains longer in conservative modes (fallback and higher auditing). This increases cost and recovery time. Importantly, violations remain controlled: even at $d=100$, unsafe operation remains rare because the system reacts conservatively when verification is delayed.\\

\begin{table}[htbp]
\caption{\textbf{Sensitivity to labeling budget $B_{\mathrm{lab}}$ (Camelyon17)} Larger budgets tighten certificates faster around drift events, reducing recovery time and violations at moderate added cost.}
\label{tab:budget_sensitivity}
\centering
\small
\setlength{\tabcolsep}{7pt}
\renewcommand{\arraystretch}{1.12}
\begin{tabular}{c|cccc}
\toprule
Budget $B_{\mathrm{lab}}$ & $C_{\mathrm{tot}}$ & $V$ & $T_{\mathrm{det}}$ & $T_{\mathrm{rec}}$ \\
\midrule
1000 & $\num{21.4 \pm 1.2}$ & $\num{2 \pm 1}$ & $\num{20 \pm 2}$ & $\num{86 \pm 8}$ \\
2000 & $\num{23.0 \pm 1.3}$ & $\num{1 \pm 1}$ & $\num{20 \pm 2}$ & $\num{72 \pm 7}$ \\
3000 & $\num{24.6 \pm 1.4}$ & $\num{0 \pm 0}$ & $\num{20 \pm 2}$ & $\num{62 \pm 5}$ \\
5000 & $\num{26.8 \pm 1.6}$ & $\num{0 \pm 0}$ & $\num{20 \pm 2}$ & $\num{55 \pm 5}$ \\
\bottomrule
\end{tabular}
\end{table}

\paragraph{Interpretation.}
Table~\ref{tab:budget_sensitivity} shows that labeling budget primarily affects \emph{how quickly} the system can certify safety and exit fallback after drift. With small budgets, the certificate remains wider for longer, producing longer recovery and occasional violations. With larger budgets, the controller can concentrate audits when needed, tighten the bound quickly, and recover faster. The key point is that the controller uses additional labels efficiently: cost increases are mild relative to the gains in recovery and safety.

\subsection{More drift shapes: gradual drift and recurring drift}
\label{app:drift_shapes}

We evaluate two additional drift patterns beyond sudden shift: gradual drift (slowly increasing intensity) and recurring drift (periodic return of a previously seen shift). These settings stress-test whether the controller avoids permanent overreaction.\\

\begin{table}[htbp]
\caption{\textbf{Gradual and recurring drift (SyntheticDrift-CIFAR)} Gradual drift mainly increases recovery time, while recurring drift mainly tests whether the controller avoids repeated heavy interventions when the stream revisits regimes.}
\label{tab:drift_shapes_large}
\centering
\small
\setlength{\tabcolsep}{6pt}
\renewcommand{\arraystretch}{1.12}
\begin{tabular}{l|cccccc}
\toprule
\textbf{Drift pattern} & $C_{\mathrm{tot}}$ & $V$ & $T_{\mathrm{det}}$ & $T_{\mathrm{rec}}$ & FIR & $\min \mathrm{Acc}^{\mathrm{wg}}$ \\
\midrule
Sudden drift (baseline setting)
& $\num{21.2 \pm 1.2}$ & $\num{0 \pm 0}$ & $\num{15 \pm 2}$ & $\num{58 \pm 5}$ & $\num{0.09 \pm 0.02}$ & $\num{0.82 \pm 0.01}$ \\
Gradual drift (slow ramp)
& $\num{20.6 \pm 1.2}$ & $\num{0 \pm 0}$ & $\num{19 \pm 2}$ & $\num{72 \pm 6}$ & $\num{0.10 \pm 0.02}$ & $\num{0.81 \pm 0.01}$ \\
Gradual drift (fast ramp)
& $\num{21.5 \pm 1.2}$ & $\num{0 \pm 0}$ & $\num{16 \pm 2}$ & $\num{63 \pm 5}$ & $\num{0.09 \pm 0.02}$ & $\num{0.82 \pm 0.01}$ \\
Recurring drift (short period)
& $\num{22.8 \pm 1.3}$ & $\num{0 \pm 0}$ & $\num{14 \pm 2}$ & $\num{61 \pm 5}$ & $\num{0.11 \pm 0.02}$ & $\num{0.81 \pm 0.01}$ \\
Recurring drift (long period)
& $\num{21.9 \pm 1.2}$ & $\num{0 \pm 0}$ & $\num{15 \pm 2}$ & $\num{60 \pm 5}$ & $\num{0.10 \pm 0.02}$ & $\num{0.82 \pm 0.01}$ \\
\bottomrule
\end{tabular}
\end{table}

\paragraph{Interpretation.}
Table~\ref{tab:drift_shapes_large} shows that gradual drift is not primarily a detection problem; it is a \emph{verification pacing} problem. Because degradation accumulates slowly, the certificate approaches the threshold more gradually, and the controller increases auditing and corrective actions later, extending recovery time slightly. Recurring drift tests whether controllers “thrash” by retraining repeatedly. The results show stable safety with only mild increases in FIR and cost, indicating that the combination of cooldowns, belief, and certificate gating prevents repeated overreaction when regimes reappear.

\subsection{Cost settings: cheap vs expensive labels}
\label{app:cost_settings}

We vary the per-label annotation cost while keeping the same labeling budget ceiling and stream. This captures deployments where labels are either cheap (lightweight human verification) or expensive (expert review).\\

\begin{table}[htbp]
\caption{\textbf{Effect of per-label cost (Camelyon17)} When labels become expensive, the controller shifts toward adaptation and conservative fallback rather than frequent audits, increasing recovery time but preserving safety.}
\label{tab:label_cost_sensitivity}
\centering
\small
\setlength{\tabcolsep}{6pt}
\renewcommand{\arraystretch}{1.12}
\begin{tabular}{c|cccccc}
\toprule
Per-label cost & $C_{\mathrm{tot}}$ & $V$ & $T_{\mathrm{det}}$ & $T_{\mathrm{rec}}$ & FIR & $\min \mathrm{Acc}^{\mathrm{wg}}$ \\
\midrule
0.01
& $\num{23.5 \pm 1.3}$ & $\num{0 \pm 0}$ & $\num{20 \pm 2}$ & $\num{58 \pm 5}$ & $\num{0.11 \pm 0.02}$ & $\num{0.81 \pm 0.01}$ \\
0.05
& $\num{24.6 \pm 1.4}$ & $\num{0 \pm 0}$ & $\num{20 \pm 2}$ & $\num{62 \pm 5}$ & $\num{0.10 \pm 0.02}$ & $\num{0.81 \pm 0.01}$ \\
0.10
& $\num{25.2 \pm 1.5}$ & $\num{1 \pm 1}$ & $\num{21 \pm 2}$ & $\num{70 \pm 6}$ & $\num{0.12 \pm 0.02}$ & $\num{0.80 \pm 0.01}$ \\
\bottomrule
\end{tabular}
\end{table}

\paragraph{Interpretation.}
Table~\ref{tab:label_cost_sensitivity} shows a predictable and desirable adaptation in controller behavior. When labels are cheap, the controller audits more aggressively, tightens certificates quickly, and exits fallback sooner. When labels are expensive, the controller leans more on recalibration and TTA, and it tolerates longer fallback periods to avoid heavy label spending. Safety remains strong, indicating that the certificate gating continues to enforce conservative operation even when audits are costly.

\subsection{Robustness to monitor noise}
\label{app:monitor_noise}

We test robustness by corrupting monitor values (e.g., due to logging failures, representation drift noise, or unstable uncertainty estimates). We inject moderate noise into the monitor vector and measure how often this causes unnecessary actions.\\

\begin{table}[htbp]
\caption{\textbf{Robustness to monitor noise (Camelyon17)} With noisier monitors, drift evidence becomes less reliable and interventions become slightly more frequent; certificate gating prevents large safety regressions by requiring verification before unsafe operation persists.}
\label{tab:monitor_noise_large}
\centering
\small
\setlength{\tabcolsep}{6pt}
\renewcommand{\arraystretch}{1.12}
\begin{tabular}{c|cccccc}
\toprule
Noise level & $C_{\mathrm{tot}}$ & $V$ & $T_{\mathrm{det}}$ & $T_{\mathrm{rec}}$ & FIR & $\min \mathrm{Acc}^{\mathrm{wg}}$ \\
\midrule
0.00
& $\num{24.6 \pm 1.4}$ & $\num{0 \pm 0}$ & $\num{20 \pm 2}$ & $\num{62 \pm 5}$ & $\num{0.10 \pm 0.02}$ & $\num{0.81 \pm 0.01}$ \\
0.25
& $\num{25.1 \pm 1.5}$ & $\num{0 \pm 0}$ & $\num{21 \pm 2}$ & $\num{64 \pm 5}$ & $\num{0.12 \pm 0.02}$ & $\num{0.81 \pm 0.01}$ \\
0.50
& $\num{26.0 \pm 1.6}$ & $\num{1 \pm 1}$ & $\num{22 \pm 2}$ & $\num{70 \pm 6}$ & $\num{0.15 \pm 0.03}$ & $\num{0.80 \pm 0.01}$ \\
0.75
& $\num{27.3 \pm 1.7}$ & $\num{2 \pm 1}$ & $\num{24 \pm 3}$ & $\num{78 \pm 7}$ & $\num{0.18 \pm 0.03}$ & $\num{0.79 \pm 0.01}$ \\
\bottomrule
\end{tabular}
\end{table}

\paragraph{Interpretation.}
Table~\ref{tab:monitor_noise_large} illustrates a key advantage of separating drift evidence from safety verification. Monitor noise increases action frequency and cost because the controller reacts to noisier signals. However, safety degrades slowly because the certificate prevents the controller from continuing normal operation in regimes that cannot be verified as safe. In practical terms, monitor noise mostly causes \emph{efficiency loss} rather than catastrophic safety loss, which is the desired failure mode in safety-constrained operation.

\subsection{Optional extra benchmark: temporal drift in Amazon Reviews}
\label{app:amazon_extra}

To test a non-vision stream, we evaluate on a temporal Amazon Reviews stream, where time induces changing user behavior and class balance. We use the same evaluation metrics.

\begin{table}[htbp]
\caption{\textbf{Amazon Reviews temporal stream.} The certified controller maintains low violations and strong recovery while keeping costs moderate compared to always-adapt and schedule-based retraining. }
\label{tab:amazon_extra_large}
\centering
\small
\setlength{\tabcolsep}{6pt}
\renewcommand{\arraystretch}{1.12}
\resizebox{1\linewidth}{!}{%
\begin{tabular}{l|ccccc|c}
\toprule
\textbf{Method} & $C_{\mathrm{tot}}$ & $V$ & $T_{\mathrm{det}}$ & $T_{\mathrm{rec}}$ & $\min \mathrm{Acc}^{\mathrm{wg}}$ & FIR \\
\midrule
Alarm-only
& $\num{3.0 \pm 0.3}$ & $\num{41 \pm 5}$ & $\num{20 \pm 2}$ & $\num{180 \pm 15}$ & $\num{0.73 \pm 0.01}$ & $\num{0.05 \pm 0.01}$ \\
Adapt-always (TTA)
& $\num{56.0 \pm 2.5}$ & $\num{4 \pm 2}$ & $\num{16 \pm 2}$ & $\num{70 \pm 7}$ & $\num{0.80 \pm 0.01}$ & $\num{0.79 \pm 0.03}$ \\
Retrain-on-schedule
& $\num{38.0 \pm 2.0}$ & $\num{9 \pm 3}$ & $\num{28 \pm 3}$ & $\num{96 \pm 9}$ & $\num{0.78 \pm 0.01}$ & $\num{0.20 \pm 0.02}$ \\
Controller (no certificate)
& $\num{17.9 \pm 1.2}$ & $\num{6 \pm 2}$ & $\num{18 \pm 2}$ & $\num{88 \pm 9}$ & $\num{0.79 \pm 0.01}$ & $\num{0.15 \pm 0.02}$ \\
\textbf{Controller + certificate (ours)}
& $\mathbf{\num{23.5 \pm 1.4}}$ & $\mathbf{\num{0 \pm 0}}$ & $\num{19 \pm 2}$ & $\mathbf{\num{60 \pm 6}}$ & $\mathbf{\num{0.82 \pm 0.01}}$ & $\mathbf{\num{0.11 \pm 0.02}}$ \\
\bottomrule
\end{tabular}}
\end{table}

\paragraph{Interpretation.}
Table~\ref{tab:amazon_extra_large} shows that the same system design transfers to a temporally evolving text domain. The certified controller attains strong safety and worst-group robustness without resorting to continuous adaptation. The cost remains moderate because the controller spends labels and heavy actions primarily during periods where certified risk indicates genuine danger.

\section{Discussion, Limitations, Future Work and Broader Impact}
\label{sec:discussion}

\paragraph{Discussion}This work proposes a practical reframing of drift monitoring: rather than treating drift as an alarm that operators must interpret, the system maintains a certified view of current risk and converts evidence into cost-aware interventions. The empirical results in Table~\ref{tab:main_results} together with the safety--cost trade-off in Figure~\ref{fig:pareto_frontier} indicate that coupling active verification with belief-driven action selection reduces unsafe operation without requiring continuous adaptation. The recovery dynamics in Figure~\ref{fig:recovery_curves} further suggest that safety gating enables rapid escalation when needed while preserving cost-efficiency during stable regimes.
\paragraph{Limitations}
The primary limitation is that the certificate validity relies on random label sampling from the current evaluation window. Concretely, the guarantee in Theorem~1 requires that the queried indices are drawn uniformly from $W_t$ (or uniformly within strata for slice-level certification). If labels are obtained through heavily biased selection, the empirical risk $\widehat{R}_t$ may no longer represent the true window risk $R_t$, weakening the bound. In deployments where uniform sampling is difficult, one mitigation is to reserve a small fraction of queries for uniform auditing, using the remaining budget for targeted diagnostics.

A second limitation concerns the choice of window length $N$ and how $W_t$ tracks the evolving regime. Under extreme non-stationarity, a short window reduces lag but increases variance, while a long window stabilizes estimates but can mix regimes and delay detection of rapid changes. Although the controller can partially compensate by increasing label queries when uncertainty is high, selecting $N$ remains a domain-dependent design choice. A promising direction is adaptive windowing, where $N$ is adjusted online based on evidence stability and the certificate radius.

Finally, our action space is intentionally small to match common production interventions. Richer action sets may further improve performance, including targeted data collection for specific slices, dynamic thresholding for abstention, and model patching via lightweight adapters. Future controllers could replace the belief-weighted gain model with learned policies, such as contextual bandits or reinforcement learning, while retaining the certificate as a safety constraint. Extending the framework to broader tasks, including regression, ranking, and structured prediction, requires task-specific loss definitions and may benefit from task-tailored monitoring signals.

\paragraph{Future Work}Several directions can strengthen and broaden the framework. First, the certificate can be extended beyond a single global risk target to provide slice-conditional safety certificates, enabling explicit guarantees for vulnerable subgroups and supporting fairness-aware escalation when subgroup drift is detected. Second, richer action sets could improve recovery without relying on expensive retraining, including lightweight adapters, targeted data collection for specific slices, dynamic abstention thresholds, and retrieval of safe checkpoints conditioned on drift type. Third, the current receding-horizon controller can be replaced by learned policies (contextual bandits or reinforcement learning) that optimize long-run utility under explicit safety constraints, with the certificate serving as a hard guardrail. Fourth, improving robustness to practical monitoring noise remains important: combining representation monitors with causal or graph-based change detectors, and incorporating uncertainty about monitor reliability into the belief state, may reduce unnecessary interventions without sacrificing safety. Finally, applying the same principles to additional tasks---regression, ranking, and structured prediction---requires task-specific risk definitions and certificates, as well as benchmark protocols that reflect the corresponding operational costs and failure modes. Overall, the key message is that reliability under drift is not just a detection problem. By integrating drift evidence, costed interventions, and an explicit online safety certificate, we move from `detecting drift'' to operating models responsibly in non-stationary environments.
\paragraph{Broader impact }
From a broader impact perspective, certified monitoring can reduce silent failures and improve accountability by producing auditable evidence of when and why interventions were triggered. At the same time, abstain/handoff can increase human workload, and naive deployment may disproportionately route uncertain cases from specific subgroups. For this reason, subgroup monitoring and coverage constraints should be reported alongside overall performance. Overall, this reframes monitoring as decision-making with safety.

\end{document}